\documentclass{article}

\usepackage{microtype}
\usepackage{graphicx}
\usepackage{subcaption}
\usepackage{booktabs} 

\usepackage{hyperref}

\usepackage[preprint]{icml2026}

\usepackage{amsmath}
\usepackage{amssymb}
\usepackage{mathtools}
\usepackage{amsthm}

\usepackage[capitalize,noabbrev]{cleveref}

\theoremstyle{plain}
\newtheorem{theorem}{Theorem}[section]
\newtheorem{proposition}[theorem]{Proposition}

\theoremstyle{definition}

\theoremstyle{remark}

\usepackage[textsize=tiny]{todonotes}

\usepackage{amsmath}
\usepackage{amssymb} 
\usepackage{amsthm}
\usepackage{booktabs}
\usepackage{makecell}
\usepackage{multirow}
\usepackage{mathtools}
\usepackage{bm}

\icmltitlerunning{Variational Bayesian Flow Network for Graph Generation}

\begin{document}

\twocolumn[
  \icmltitle{Variational Bayesian Flow Network for Graph Generation}



  \icmlsetsymbol{equal}{*}

  \begin{icmlauthorlist}
    \icmlauthor{Yida Xiong}{1}
    \icmlauthor{Jiameng Chen}{1}
    \icmlauthor{Xiuwen Gong}{1}
    \icmlauthor{Jia Wu}{2}
    \icmlauthor{Shirui Pan}{3}
    \icmlauthor{Wenbin Hu}{1,4}
  \end{icmlauthorlist}

  \icmlaffiliation{1}{School of Computer Science, Wuhan University, Wuhan, China}
  \icmlaffiliation{2}{Department of Computing, Macquarie University, Sydney, Australia}
  \icmlaffiliation{3}{Griffith University, Gold Coast, Australia}
  \icmlaffiliation{4}{Wuhan University Shenzhen Research Institute, Shenzhen, China}

  \icmlcorrespondingauthor{Yida Xiong}{yidaxiong@whu.edu.cn}
  \icmlcorrespondingauthor{Wenbin Hu}{hwb@whu.edu.cn}

  \icmlkeywords{Variational Bayesian Flow Network, Graph Generation, Coupled Bayesian Update, Structured Precision}

  \vskip 0.3in
]



\printAffiliationsAndNotice{}  

\begin{abstract}
  Graph generation aims to sample discrete node and edge attributes while satisfying coupled structural constraints. 
  Diffusion models for graphs often adopt largely factorized forward-noising, and many flow-matching methods start from factorized reference noise and coordinate-wise interpolation, so node–edge coupling is not encoded by the generative geometry and must be recovered implicitly by the core network, which can be brittle after discrete decoding.
  Bayesian Flow Networks (BFNs) evolve distribution parameters and naturally support discrete generation. But classical BFNs typically rely on factorized beliefs and independent channels, which limit geometric evidence fusion. 
  We propose \emph{Variational Bayesian Flow Network} (VBFN), which performs a variational lifting to a tractable joint Gaussian variational belief family governed by structured precisions. 
  Each Bayesian update reduces to solving a symmetric positive definite linear system, enabling coupled node and edge updates within a single fusion step. 
  We construct sample-agnostic sparse precisions from a representation-induced dependency graph, thereby avoiding label leakage while enforcing node–edge consistency. 
  On synthetic and molecular graph datasets, VBFN improves fidelity and diversity, and surpasses baseline methods. 
\end{abstract}

\section{Introduction}

Graph generation is a fundamental problem with broad impact in network science~\cite{xu2019link,su2022comprehensive}, drug discovery~\cite{li2025graph}, and computational biology~\cite{ijcai2025p303}. 
Diffusion models~\cite{jo2024graph,boget2025simple} and flow matching methods~\cite{eijkelboom2024variational,qinmadeira2024defog} have achieved strong performance on graph-structured data~\cite{liu2023generative,cao2024survey}. 
However, graph generation remains challenging because graphs are discrete and their validity depends on coupled relations between node variables and edge variables.

Many graph diffusion and flow matching methods learn in a continuous surrogate space by regressing embeddings or logits. They recover discrete node and edge attributes during sampling using an $\arg\max$ rule or a projection operator. 
Discrete-state diffusion avoids this design by defining forward transitions directly on categorical spaces~\cite{austin2021structured}. 
When training relies on a continuous regression loss, but sampling is governed by discrete decisions, the surrogate objective will prioritize within-class numerical fidelity and encourage averaging behaviors that increase the chance of boundary errors after discretization~\cite{bartlett2006convexity}. 
In relational settings, a few boundary errors can violate global consistency constraints and distort structural statistics.

Another limitation is the geometry induced by the noising process. 
Many diffusion models start from factorized isotropic noise and use Gaussian perturbations with independent coordinates~\cite{ho2020denoising,song2021score}. 
This independence geometry fails to provide an explicit probabilistic mechanism for joint evidence fusion across node and edge variables, so geometric coupling must be reconstructed by the core network or enforced by external penalties, which increases the burden on model capacity.

Bayesian Flow Networks (BFNs)~\cite{graves2023bayesian} provide a complementary viewpoint by evolving beliefs in the space of distribution parameters and matching sender and receiver message distributions through Bayesian update. 
This makes discrete generation natural because beliefs parameterize distributions whose integrated mass gives class probabilities, so sampling does not rely on an external rounding step~\cite{graves2023bayesian,xiong2025transport}.
BFNs further support flexible channel designs and apply to continuous, discrete, and discretized modalities~\cite{song2024unified,qu2024molcraft}. 
Nevertheless, classical BFNs typically assume a factorized belief family with an independent noise channel. As a result, Bayesian update is performed element-wise and geometric dependencies must be learned implicitly by the core network or enforced by external regularization.

To address this limitation, we propose \emph{Variational Bayesian Flow Network} (VBFN), which performs a variational lifting from factorized Gaussian beliefs to a tractable joint Gaussian variational belief family governed by structured precisions.
VBFN introduces correlation directly into the graph generation and fusion mechanism, and it turns Bayesian update from an element-wise rule into a coupled inference problem. 
At each time $t$, the posterior belief is obtained by solving a single symmetric positive definite (SPD) linear system, which enables coherent joint updates of node and edge variables within one computation. 
VBFN preserves the message–matching principle of classical BFNs and it keeps the continuous-time training objective analytically tractable while replacing independence geometry with a relational one. 
Particularly, we parameterize the sender channel with a tied covariance structure across time, which yields a closed-form structured variational objective while retaining a coupled update geometry.
The coupling is defined by a representation-induced dependency graph and it never queries the unknown label adjacency, which avoids label leakage while enforcing basic node and edge consistency.
We summarize our contributions as follows.
\vspace{-5pt}
\begin{itemize}
\item We lift the factorized BFN belief family to a tractable joint Gaussian variational belief family governed by structured precision operators, enabling geometric belief evolution for graph generation.
\item We derive a closed-form joint Bayesian update in operator form, where posterior inference reduces to solving an SPD linear system that couples node and edge variables within each update.
\item We keep the message–matching principle of classical BFNs and derive a structured variational objective under a tied channel covariance, in which the flow distribution is induced by coupled update dynamics.
\end{itemize}

\section{Related work}

\paragraph{Graph generative models.}
Autoregressive graph generators produce nodes and edges sequentially to maintain validity~\cite{you2018graphrnn,liao2019efficient,dai2020scalable,goyal2020graphgen}, but their inference is costly. 
One-shot GAN and VAE variants improve parallelism~\cite{goodfellow2014generative,kingma2013auto} yet often struggle with geometric dependencies under simplified assumptions~\cite{martinkus2022spectre,prykhodko2019novo,li2022transformer,huang20223dlinker}. 
Recent work is dominated by diffusion and flow-based paradigms.

\vspace{-5pt}
\paragraph{Diffusion and flow matching.}
Diffusion models have become a leading approach for graph generation in both score-based and discrete formulations~\cite{jo2022score,vignac2023digress,xu2024discrete,lee2023exploring}. 
Flow matching methods provide an alternative continuous-time training principle~\cite{eijkelboom2024variational,qinmadeira2024defog}. 
Many methods employ factorized noise that perturbs node and edge attributes independently, which does not encode intrinsic coupling and leaves structural consistency to be recovered by the core network~\cite{jo2024graph}.

\vspace{-5pt}
\paragraph{Bayesian Flow Networks.}
BFNs evolve distribution parameters and match sender and receiver message distributions through Bayesian update~\cite{graves2023bayesian,qu2024molcraft}. 
Classical BFNs typically rely on factorized coding geometry for tractable updates, which limits their ability to represent geometric structure~\cite{xiong2025transport}. 


\section{Background}
\label{sec:background}

\subsection{Problem Definition}
\label{subsec:bg_continuous}
\vspace{-2pt}
We study generative modeling of graphs $G=(X, A)$ with $N$ nodes drawn from an unknown distribution $p_{\mathrm{data}}(G)$,
where $X\in\mathbb{R}^{N\times d_x}$ denotes node attributes and $A\in\mathbb{R}^{N\times N\times d_a}$ denotes edge attributes.
This formulation covers general graphs whose attributes may be continuous or categorical.
We represent them in a unified continuous parameterization $\bm{z} \in\mathbb{R}^{D}$ where $D = N d_x + N^2 d_a$, and keep task-specific parameterization details in Appendix~\ref{app:implementation_details}.

\subsection{Bayesian Flow Networks}
\label{subsec:bg_bfn}
\vspace{-2pt}
BFNs define a generative process as a continuous-time \emph{Bayesian inference} process on a latent signal representation.
Rather than defining a forward-noising dynamics on samples like diffusion models, BFNs maintain a time-indexed belief $q_t(\bm z)$ and refine it by repeatedly fusing Gaussian messages through Bayesian update.
The essential point is that the stochastic process evolves in the \emph{space of belief parameters} rather than directly evolving a noisy sample.

\vspace{-3pt}
\paragraph{Belief parameter.}
In Gaussian BFNs, let $t\in[0,1]$ be the process time, and the belief parameter over the unknown signal $\bm z\in\mathbb{R}^D$ is chosen to be an independent Gaussian
\begin{equation}
q_t(\bm z)=\mathcal{N}\big(\bm z;\bm\mu_t,\rho_t^{-1}\bm I\big),
\label{eq:bg_bfn_belief}
\end{equation}
where $\bm\theta_t \triangleq \{\bm\mu_t,\rho_t\} \in\mathbb{R}^D$ collects the mean $\bm\mu_t\in\mathbb{R}^D$ and a scalar precision $\rho_t>0$. And variance $\sigma_t^2\triangleq \rho_t^{-1}$.
Equivalently, the belief precision is $\bm\Sigma_t^{-1}=\sigma_t^{-2}\bm I$.
This factorized coding geometry is the key restriction that enables closed-form Bayesian update in classical BFNs.

\vspace{-3pt}
\paragraph{Sender and receiver distributions.}
BFNs introduce a message variable $\bm y\in\mathbb{R}^D$.
The sender reveals information about the unknown target $\bm z$ by transmitting a noisy message distribution:
\begin{equation}
p_{\mathrm{S}}(\bm y\mid \bm z;t)
=
\mathcal{N}\Big(\bm y;\bm z,\alpha(t)^{-1}\bm I\Big),
\label{eq:bg_bfn_sender_iso}
\end{equation}
where $\alpha(t)>0$ is an accuracy rate.
The \emph{receiver} forms a matched Gaussian distribution centered at the model prediction $\widehat{\bm z}_{\phi}(\bm\theta_t,t)$:
\begin{equation}
p_{\mathrm{R}}(\bm y \mid \bm\theta_t;t)
=
\mathcal{N}\Big(\bm y;\widehat{\bm z}_{\phi}(\bm\theta_t,t),\alpha(t)^{-1}\bm I\Big).
\label{eq:bg_bfn_receiver_iso}
\end{equation}
Intuitively, $p_{\mathrm{S}}$ describes how the sender transmits a measurement of $\bm z$ at accuracy $\alpha(t)$, while $p_{\mathrm{R}}$ describes the distribution of messages implied by the model's current belief.

\vspace{-3pt}
\paragraph{Bayesian update and flow distribution.}
Given a noisy message $\bm y\sim p_{\mathrm{S}}(\cdot\mid \bm z;t)$, BFNs update the belief parameter by Bayesian update:
\begin{equation}
q_t^{+}(\bm z)\ \propto\ q_t(\bm z)\,p_{\mathrm{S}}(\bm y\mid \bm z;t).
\label{eq:bg_bfn_bayes}
\end{equation}
Because both factors are Gaussian in $\bm z$, the posterior remains Gaussian and the Bayesian update admits a closed form. Specifically, $\bm\theta_t^{+}\triangleq\{\bm\mu_t^{+},\rho_t^{+}\}$ is updated via
\begin{equation}
\begin{aligned}
\rho_t^{+} &= \rho_t+\alpha(t),
\\
\bm\mu_t^{+} &= \frac{\rho_t\,\bm\mu_t+\alpha(t)\,\bm y}{\rho_t+\alpha(t)}.
\label{eq:bg_bfn_update}
\end{aligned}
\end{equation}
Starting from a prior $q_0$, repeatedly sampling $\bm y$ from the sender and applying Eq.~\eqref{eq:bg_bfn_update} defines a stochastic process of belief parameters $\bm\theta_t$.
The distribution of $\bm\theta_t$ induced by a given target $\bm z$ is referred to as the \emph{flow distribution}:
\begin{equation}
\bm\theta_t \sim p_F(\cdot\mid \bm z;t).
\label{eq:bg_bfn_flow_dist}
\end{equation}

\vspace{-3pt}
\paragraph{Training objective.}
BFNs train the predictor by matching sender and receiver message distributions along the flow:
\begin{equation}
\begin{aligned}    
\mathcal{L}_{\infty}(\phi)
=
&\mathbb{E}_{\bm z\sim p_{\mathrm{data}}}
\int_{0}^{1}
\mathbb{E}_{\bm\theta_t\sim p_F(\cdot\mid \bm z;t)}
\\
& \Big[
D_{\mathrm{KL}}\big(
p_{\mathrm{S}}(\cdot\mid \bm z;t)\ \big\|\ p_{\mathrm{R}}(\cdot\mid \bm\theta_t;t)
\big)
\Big]\,
d\beta(t).
\label{eq:bg_bfn_objective}
\end{aligned}
\end{equation}
Because Eqs.~\eqref{eq:bg_bfn_sender_iso} and~\eqref{eq:bg_bfn_receiver_iso} tie the covariance, the Gaussian KL divergence simplifies to a weighted squared error between $\bm z$ and $\widehat{\bm z}_\phi(\bm\theta_t,t)$, yielding the familiar $\mathcal{L}_\infty$ objective.


\vspace{-3pt}
\paragraph{Limitation for graphs.}
Classical BFNs are analytically convenient since both the belief uncertainty and the sender channel are independent. As a result, the Bayesian update decomposes into independent reweighting across indices.
For graphs, this independent geometry fails to encode instantaneous dependencies between $X$ and $A$.
VBFN keeps the message–matching objective but lifts this geometry to joint precisions so that inference becomes intrinsically coupled.


\begin{figure*}
    \centering
    \includegraphics[width=1\linewidth]{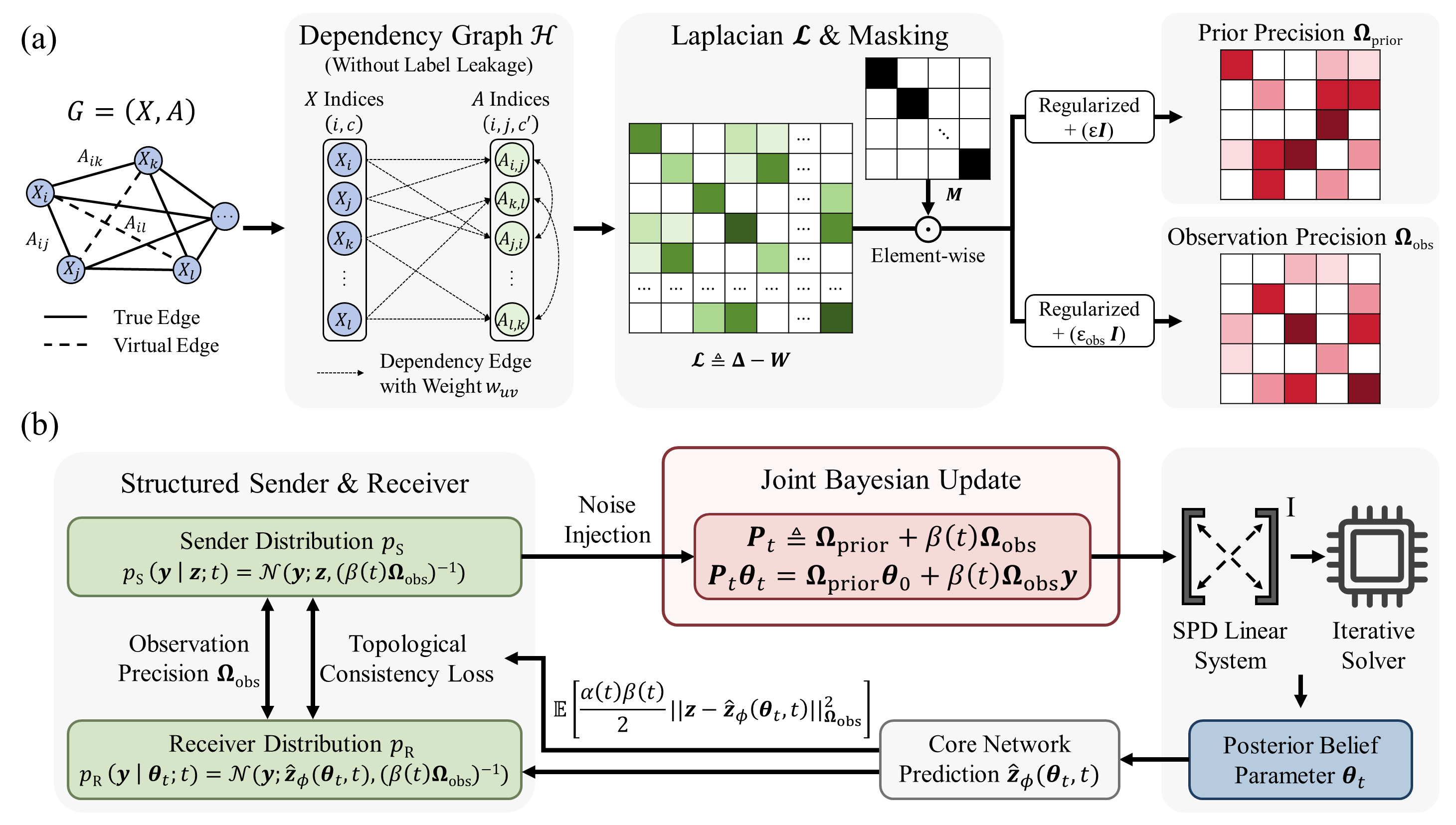}
    \caption{Illustration of VBFN framework. \textbf{(a)}: From $G=(X, A)$, construct a no-leakage dependency graph $\mathcal{H}$, assign edge weights $\{w_{uv}\}$, and form the masked weighted Laplacian $\bm{\mathcal{L}}$. Then instantiate $\bm\Omega_{\mathrm{prior}}$ and obtain $\bm\Omega_{\mathrm{obs}}$. \textbf{(b)}: With fixed message precision $\bm\Omega_{\mathrm{obs}}$, sender–receiver message–matching defines the training loss, while inference uses a coupled Bayesian update solved by iterative solvers.}
    \label{fig:model}
\end{figure*}

\section{Methodology}
\label{sec:method}

\subsection{Variational Lifting of Factorized Geometry}
\label{subsec:method_variational}
\vspace{-2pt}
We introduce a joint Gaussian variational belief family with structured precision, which keeps Bayesian update tractable while turning element-wise fusion into coupled inference for node and edge variables.
For geometric graph data, the independence geometry intrinsic to classical BFNs fails to model the dependencies between nodes and edges, since node and edge degrees of freedom are inherently coupled. 
VBFN departs from this independence geometry and lifts the belief family from independent Gaussians to a variational family of joint Gaussians whose uncertainty is governed by a precision operator.

Concretely, VBFN lifts the factorized family to a tractable joint Gaussian belief family over the graph signal $\bm z$:
\begin{equation}
q_t(\bm z)
=
\mathcal{N}\big(\bm z;\bm\theta_t,\bm P_t^{-1}\big),
\label{eq:method_belief}
\end{equation}
where $\bm P_t=\bm\Sigma_t^{-1}\succ\bm 0$ is a time-dependent joint precision, which governs how evidence about one component of the graph signal influences all others during Bayesian update.
This restriction preserves closed-form updating while introducing geometric coupling through $\bm P_t$.

We parameterize information revelation by an accuracy rate $\alpha(t)>0$ and define the cumulative schedule
\begin{equation}
\beta(t)\ \triangleq\ \int_0^t \alpha(s)\,ds,
\label{eq:method_beta}
\end{equation}
so that $\beta(0)=0$. A predictor $\widehat{\bm z}_\phi(\bm\theta_t,t)$ produces a denoised estimate of $\bm z$ from the current state.
The remaining subsections specify how we instantiate structural operators and how they induce a coupled flow distribution $p_F(\bm\theta_t\mid \bm z;t)$ through Bayesian update.
The framework of VBFN is illustrated in \cref{fig:model}.
\vspace{-2pt}

\subsection{Intrinsic Structural Priors}
\label{subsec:method_intrinsic_structural_priors}

\vspace{-2pt}
\paragraph{The necessity of structural geometry.}
A joint belief parameter becomes meaningful only if the geometry that defines it is itself structural.
VBFN therefore introduces a structured precision operator $\bm\Omega_{\mathrm{prior}}\succ\bm 0$ and equips the latent graph signal $\bm z$ with a Gaussian Markov random field (GMRF) prior~\cite{rue2005gaussian}
\begin{equation}
p_{\mathrm{prior}}(\bm z)
=
\mathcal{N}\big(\bm z;\bm\theta_0,\bm\Omega_{\mathrm{prior}}^{-1}\big).
\label{eq:method_prior}
\end{equation}
In graph generation, $\bm\Omega_{\mathrm{prior}}$ must not depend on the unknown sample adjacency $A$ to avoid label leakage.
We instead construct $\bm\Omega_{\mathrm{prior}}$ from a fixed dependency structure that is induced by the representation of $(X, A)$ itself.
This structure captures how node and edge variables must interact under basic consistency relations, while remaining agnostic to which edges are present in any particular sample.
\vspace{-5pt}

\paragraph{A representation-level dependency graph.}
Let $\mathcal{I}_X=\{(i,c): i\in[N], c\in[d_x]\}$ index entries of $X$, and $\mathcal{I}_A=\{(i,j,c): i,j\in[N], c\in[d_a]\}$ index entries of $A$.
Define a dependency graph $\mathcal{H}=(\mathcal{V}_{\mathcal{H}},\mathcal{E}_{\mathcal{H}})$ with vertices 
$\mathcal{V}_{\mathcal{H}}=\mathcal{I}_X\cup\mathcal{I}_A$,
and edges capturing intrinsic representation couplings:
\begin{equation}
\begin{aligned}
\mathcal{E}_{\mathrm{inc}}
=
\big\{
\big((i,c),(i,j,c')\big),\ \big((j,c),(i,j,c')\big)
\\
:\ (i,c)\in\mathcal{I}_X,\ (i,j,c')\in\mathcal{I}_A
\big\}.
\label{eq:method_incidence_edges}
\end{aligned}
\end{equation}
We also include symmetry couplings when the chosen parameterization enforces $A_{ij}=A_{ji}$ by adding
\begin{equation}
\mathcal{E}_{\mathrm{sym}}
=
\big\{\big((i,j,c),(j,i,c)\big):\ (i,j,c)\in\mathcal{I}_A,\ i<j\big\},
\label{eq:method_symmetry_edges}
\end{equation}
and setting $\mathcal{E}_{\mathcal{H}}=\mathcal{E}_{\mathrm{inc}}\cup\mathcal{E}_{\mathrm{sym}}$; otherwise $\mathcal{E}_{\mathrm{sym}}=\emptyset$.

Crucially, $\mathcal{H}$ depends only on the tensor structure of $(X, A)$ and on invariances of the representation, and it consequently never queries the sample-specific adjacency values.

We equip the dependency graph $\mathcal{H}=(\mathcal{V}_{\mathcal{H}},\mathcal{E}_{\mathcal{H}})$ with nonnegative edge weights
$\{w_{uv}\}_{(u,v)\in\mathcal{E}_{\mathcal{H}}}$, where $w_{uv}=w_{vu}$ for an undirected $\mathcal{H}$.
Unless stated otherwise, we set $w_{uv}=\lambda_X$ for $(u,v)\in\mathcal{E}_{\mathrm{inc}}$ and $w_{uv}=\lambda_A$ for $(u,v)\in\mathcal{E}_{\mathrm{sym}}$, with $\lambda_X,\lambda_A\ge 0$.
Then define the weighted adjacency matrix $\bm W\in\mathbb{R}^{D\times D}$ as well as the degree matrix $\bm\Delta\in\mathbb{R}^{D\times D}$ by
\begin{equation}
\begin{aligned}
& W_{uv}
=
\begin{cases}
w_{uv},&\text{if }(u,v)\in\mathcal{E}_{\mathcal{H}},\\
0, & \text{otherwise},
\end{cases}
\\
& \Delta_{uu}=\sum_{v\in\mathcal{V}_{\mathcal{H}}} W_{uv}.
\end{aligned}
\end{equation}
Intuitively, $\Delta_{uu}$ aggregates the total coupling strength incident to index $u$ in $\mathcal{H}$.

\vspace{-2pt}
\paragraph{Weighted combinatorial Laplacian.}
The weighted combinatorial Laplacian of $\mathcal{H}$ is the linear operator
\begin{equation}
\bm{\mathcal L} \triangleq \bm\Delta -\bm W \in\mathbb{R}^{D\times D},
\label{eq:method_laplacian_def}
\end{equation}
which acts on any signal $\bm s\in\mathbb{R}^{D}$ indexed by $\mathcal{V}_{\mathcal{H}}$ via $(\bm{\mathcal L}\bm s)_u=\sum_{v}W_{uv}(s_u-s_v)$~\cite{chung1997spectral}.

With the diagonal mask operator $\bm M$, we instantiate the prior precision as
\begin{equation}
\bm\Omega_{\mathrm{prior}}
=
\bm M\bm{\mathcal L}\bm M
+
\varepsilon \bm I,
\label{eq:method_omega_prior}
\end{equation}
where $(\lambda_X,\lambda_A)$ are encoded in $\bm W$ and $\varepsilon>0$.
Placing $\varepsilon \bm I$ outside the masking guarantees $\bm\Omega_{\mathrm{prior}}\succ \bm 0$ even when $\bm M$ zeroes padded entries, ensuring that posterior inference remains mathematically solvable.

The Laplacian also defines a nonnegative quadratic functional that measures disagreement along the couplings in $\mathcal{H}$.
For any $\bm s\in\mathbb{R}^{D}$, define the Dirichlet energy
\begin{equation}
\mathcal{J}(\bm s)
\triangleq
\bm s^\top \bm{\mathcal L}\bm s
=
\frac{1}{2}\sum_{(u,v)\in\mathcal{E}_{\mathcal H}} w_{uv}(s_u-s_v)^{2}.
\label{eq:method_dirichlet}
\end{equation}
A Laplacian-based precision therefore does not enforce global averaging across all entries of $(X, A)$.
It penalizes discrepancies only between pairs $(u,v)$ adjacent in $\mathcal{H}$, and the strengths $(\lambda_X,\lambda_A)$ in $\{w_{uv}\}$ control how strongly this structural bias influences the prior relative to the subsequent data-driven Bayesian update.

\subsection{Joint Bayesian Update via a Structured Channel}
\label{subsec:method_joint_update}

A structural prior alone is insufficient: if the transmission channel injects independent noise, the Bayesian update remains effectively local.

\vspace{-2pt}
\paragraph{Structured sender and receiver.}
VBFN therefore defines a structured sender whose noise is colored by an observation precision $\bm\Omega_{\mathrm{obs}}\succ\bm 0$ that shapes how uncertainty is injected into the transmitted messages.
To avoid label leakage, we construct $\bm\Omega_{\mathrm{obs}}$ from the same fixed dependency structure used for $\bm\Omega_{\mathrm{prior}}$, either by tying it to $\bm\Omega_{\mathrm{prior}}$ or by adopting a diagonalized variant for numerical stability.
The concrete instantiations and configuration choices are in Appendix~\ref{app:hyper}.

By time $t$, the sender transmits a cumulative message $\bm y$ via
\begin{equation}
p_{\mathrm{S}}(\bm y \mid \bm z;t)
=
\mathcal{N}\Big(\bm y;\bm z,\big(\beta(t)\bm\Omega_{\mathrm{obs}}\big)^{-1}\Big).
\label{eq:method_sender}
\end{equation}
The receiver constructs an analogous message distribution centered at its prediction:
\begin{equation}
p_{\mathrm{R}}(\bm y\mid \bm\theta_t;t)
=
\mathcal{N}\Big(\bm y;\widehat{\bm z}_\phi(\bm\theta_t,t),\big(\beta(t)\bm\Omega_{\mathrm{obs}}\big)^{-1}\Big).
\vspace{-0.5pt}
\label{eq:method_receiver}
\end{equation}

\paragraph{Bayesian update rule and coupled fusion.}
Posterior inference is defined by the Bayesian product rule
\begin{equation}
p_{\mathrm{post}}(\bm z\mid \bm y;t)
\ \propto\
p_{\mathrm{prior}}(\bm z)\,p_{\mathrm{S}}(\bm y\mid \bm z;t).
\label{eq:method_bayes_product}
\end{equation}
Both factors are Gaussian in $\bm z$, so their product is Gaussian.
In canonical form, multiplying Gaussians adds their quadratic terms, which implies that the corresponding precision operators add.
This is the point at which the update becomes intrinsically coupled.

The Bayesian product rule Eq.~\eqref{eq:method_bayes_product} implies that posterior inference remains within the Gaussian family.
The following result states the closed-form posterior and reveals that, under structured precisions, the Bayesian update becomes a coupled linear solve.
\begin{theorem}[Joint Bayesian update with structured precisions]
\label{thm:joint_update}
Assume the structured prior Eq.~\eqref{eq:method_prior} with $\bm\Omega_{\mathrm{prior}}\succ\bm 0$ and the structured sender Eq.~\eqref{eq:method_sender} with $\bm\Omega_{\mathrm{obs}}\succ\bm 0$.
Define the fused posterior precision at time $t$
\begin{equation}
\bm P_t
\triangleq
\bm\Omega_{\mathrm{prior}}+\beta(t)\bm\Omega_{\mathrm{obs}}.
\label{eq:method_post_precision}
\end{equation}
Then $\bm P_t\succ\bm 0$ and
\begin{equation}
p_{\mathrm{post}}(\bm z\mid \bm y;t)
=
\mathcal{N}\big(\bm z;\bm\theta_t,\bm P_t^{-1}\big),
\label{eq:method_post}
\end{equation}
where the posterior belief parameter $\bm\theta_t$ is the unique solution of the SPD linear system
\begin{equation}
\bm P_t\,\bm\theta_t
=
\bm\Omega_{\mathrm{prior}}\bm\theta_0+\beta(t)\bm\Omega_{\mathrm{obs}}\bm y.
\label{eq:method_post_system}
\end{equation}
\end{theorem}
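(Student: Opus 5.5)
The plan is the standard completing-the-square argument in canonical (information) form, carried out in three steps.

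First, positive definiteness. I will establish $\bm P_t\succ\bm 0$ directly from the quadratic form. Since $\beta(t)=\int_0^t\alpha(s)\,ds\ge 0$ by Eq.~\eqref{eq:method_beta} and $\alpha>0$, for any nonzero $\bm v$ we have
\[
\bm v^\top\bm P_t\bm v=\bm v^\top\bm\Omega_{\mathrm{prior}}\bm v+\beta(t)\,\bm v^\top\bm\Omega_{\mathrm{obs}}\bm v>0 .
\]
The first term is strictly positive by $\bm\Omega_{\mathrm{prior}}\succ\bm 0$, which the $\varepsilon\bm I$ term in Eq.~\eqref{eq:method_omega_prior} guarantees. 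The second term is nonnegative. Symmetry of $\bm P_t$ is inherited from both summands. Hence $\bm P_t$ is SPD, and therefore invertible, so the system Eq.~\eqref{eq:method_post_system} has a unique solution $\bm\theta_t=\bm P_t^{-1}(\bm\Omega_{\mathrm{prior}}\bm\theta_0+\beta(t)\bm\Omega_{\mathrm{obs}}\bm y)$. This holds even at $t=0$, where the system reduces to $\bm\theta_t=\bm\theta_0$.

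Second, the Gaussian form of the posterior. I will write the log of the product in Eq.~\eqref{eq:method_bayes_product} as a function of $\bm z$, dropping terms constant in $\bm z$:
\[
-\tfrac12(\bm z-\bm\theta_0)^\top\bm\Omega_{\mathrm{prior}}(\bm z-\bm\theta_0)-\tfrac{\beta(t)}2(\bm y-\bm z)^\top\bm\Omega_{\mathrm{obs}}(\bm y-\bm z).
\]
Here the sender density Eq.~\eqref{eq:method_sender} is read as a function of $\bm z$. Because the sender Gaussian is symmetric in $\bm y-\bm z$, this reading is legitimate. Expanding and using symmetry of both precisions gives
\[
-\tfrac12\bm z^\top\bm P_t\bm z+\bm z^\top\bm h_t+\text{const},\qquad \bm h_t=\bm\Omega_{\mathrm{prior}}\bm\theta_0+\beta(t)\bm\Omega_{\mathrm{obs}}\bm y .
\]
Completing the square with $\bm\theta_t=\bm P_t^{-1}\bm h_t$ rewrites this as $-\tfrac12(\bm z-\bm\theta_t)^\top\bm P_t(\bm z-\bm\theta_t)+\text{const}$.

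Third, identification of the posterior. Since $\bm P_t\succ\bm 0$, the function $\exp$ of this quadratic is integrable. After normalization it is exactly $\mathcal N(\bm z;\bm\theta_t,\bm P_t^{-1})$, which gives Eq.~\eqref{eq:method_post}. Characterizing $\bm\theta_t$ as the solution of $\bm P_t\bm\theta_t=\bm h_t$ is precisely Eq.~\eqref{eq:method_post_system}.

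There is no real obstacle; the only points needing care are the following. Positive definiteness must come from the prior term, because $\beta(0)=0$ means the observation term cannot be relied on. The cross terms must be checked to combine correctly, which uses symmetry of $\bm\Omega_{\mathrm{obs}}$ and $\bm\Omega_{\mathrm{prior}}$. Finally, we need $\bm P_t\succ\bm 0$ before normalizing, so that the resulting density is proper.
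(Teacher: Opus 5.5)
Your proposal is correct and follows essentially the same route as the paper's proof. Both expand the log-product into canonical form with $\bm h_t=\bm\Omega_{\mathrm{prior}}\bm\theta_0+\beta(t)\bm\Omega_{\mathrm{obs}}\bm y$, complete the square around $\bm\theta_t=\bm P_t^{-1}\bm h_t$, and read off the Gaussian kernel; the only difference is that you prove $\bm P_t\succ\bm 0$ inline from the prior term, whereas the paper cites it from Proposition~\ref{prop:spd} by the identical argument.
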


Theorem~\ref{thm:joint_update} is the mathematical point where VBFN departs from factorized BFNs.
When $\bm\Omega_{\mathrm{prior}}$ and $\bm\Omega_{\mathrm{obs}}$ are diagonal, Eq.~\eqref{eq:method_post_system} decomposes into independent scalar fusions.
When they are structured, Eq.~\eqref{eq:method_post_system} performs a coupled equilibrium computation.
Even if $\bm P_t$ is sparse, its inverse is typically dense, so a local perturbation in $\bm y$ propagates globally through $\bm P_t^{-1}$ within a single Bayesian update.
This is the operational meaning of co-evolution.
Explicit inversion is unnecessary, and Sec.~\ref{subsec:method_solver} shows how to solve Eq.~\eqref{eq:method_post_system} efficiently using iterative methods and matrix–vector products with $\bm P_t$.
The complete proof is provided in Appendix.~\ref{app:proof_joint_update}.

The following Proposition~\ref{prop:diag_reduction} clarifies how VBFN generalizes classical Gaussian BFNs.
\begin{proposition}[Classical BFNs as a special case]
\label{prop:diag_reduction}
Suppose $\bm\Omega_{\mathrm{prior}}$ and $\bm\Omega_{\mathrm{obs}}$ are diagonal, namely
\begin{equation}
\begin{aligned}
\bm\Omega_{\mathrm{prior}}&=\mathrm{diag}(\omega^{\mathrm{prior}}_1,\ldots,\omega^{\mathrm{prior}}_D),
\\
\bm\Omega_{\mathrm{obs}}&=\mathrm{diag}(\omega^{\mathrm{obs}}_1,\ldots,\omega^{\mathrm{obs}}_D).
\label{eq:diag_case_assump}
\end{aligned}
\end{equation}
Then Eq.~\eqref{eq:method_post_system} decomposes across dimensions.
For each $i\in[D]$, the posterior belief parameter satisfies the scalar fusion rule
\begin{equation}
\begin{aligned}
\theta_{t,i}
&=
\frac{\omega^{\mathrm{prior}}_i\,\theta_{0,i}+\beta(t)\,\omega^{\mathrm{obs}}_i\,y_i}
{\omega^{\mathrm{prior}}_i+\beta(t)\,\omega^{\mathrm{obs}}_i},
\\
P_{t,i}&=\omega^{\mathrm{prior}}_i+\beta(t)\,\omega^{\mathrm{obs}}_i,
\label{eq:diag_fusion}
\end{aligned}
\end{equation}
which coincides with the standard factorized Gaussian BFN update when $\omega^{\mathrm{obs}}_i\equiv 1$.
\end{proposition}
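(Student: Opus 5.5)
The plan is to specialize Theorem~\ref{thm:joint_update} directly; no new machinery is needed. Under the diagonal assumption Eq.~\eqref{eq:diag_case_assump}, the fused precision of Eq.~\eqref{eq:method_post_precision} is the sum of two diagonal matrices. It is therefore itself diagonal:
\begin{equation*}
\bm P_t=\mathrm{diag}\big(\omega^{\mathrm{prior}}_1+\beta(t)\omega^{\mathrm{obs}}_1,\ldots,\omega^{\mathrm{prior}}_D+\beta(t)\omega^{\mathrm{obs}}_D\big).
\end{equation*}
This gives the claimed formula for $P_{t,i}$. Positivity of each diagonal entry follows because $\bm\Omega_{\mathrm{prior}},\bm\Omega_{\mathrm{obs}}\succ\bm 0$ forces every $\omega_i^{\mathrm{prior}},\omega_i^{\mathrm{obs}}>0$, and $\beta(t)\ge 0$ since $\alpha>0$ and $\beta(0)=0$.

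Next I would write out the SPD system Eq.~\eqref{eq:method_post_system} componentwise. Both sides involve only diagonal operators, so the $i$-th row reads
\begin{equation*}
\big(\omega^{\mathrm{prior}}_i+\beta(t)\omega^{\mathrm{obs}}_i\big)\theta_{t,i}=\omega^{\mathrm{prior}}_i\theta_{0,i}+\beta(t)\omega^{\mathrm{obs}}_i y_i.
\end{equation*}
The system thus decouples into $D$ independent scalar equations. Dividing by the positive coefficient yields Eq.~\eqref{eq:diag_fusion}. Uniqueness is inherited from Theorem~\ref{thm:joint_update}, or is immediate from the nonzero pivot.

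For the final identification, set $\omega^{\mathrm{obs}}_i\equiv 1$ and take the prior to be isotropic, $\omega_i^{\mathrm{prior}}=\rho_0$. The fusion rule becomes $\rho_t=\rho_0+\beta(t)$ and $\theta_{t,i}=(\rho_0\theta_{0,i}+\beta(t)y_i)/(\rho_0+\beta(t))$. This matches Eq.~\eqref{eq:bg_bfn_update} with the per-step accuracy replaced by the cumulative accuracy $\beta(t)$, where $\bm y$ is the cumulative message of Eq.~\eqref{eq:method_sender}.

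The only point needing care is exactly this identification. Classical BFNs apply Eq.~\eqref{eq:bg_bfn_update} incrementally with $\alpha$, whereas here there is a single fusion with the cumulative message. I would argue that composing sequential Gaussian updates adds precisions, so precision accumulates to $\rho_0+\int_0^t\alpha=\rho_0+\beta(t)$. The precision-weighted average of independent messages is then a sufficient statistic, distributed as $\mathcal N(\bm z,\beta(t)^{-1}\bm I)$. This shows the one-shot form coincides with the standard BFN flow.
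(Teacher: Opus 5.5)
Your proposal is correct and follows the paper's proof: both note that $\bm P_t$ is diagonal, read off the $i$-th row of Eq.~\eqref{eq:method_post_system}, and divide by the positive pivot. Your extra step reconciles the one-shot cumulative-message fusion with the incremental classical update: precisions add, and the precision-weighted average of independent messages is distributed as $\mathcal N(\bm z,\beta(t)^{-1}\bm I)$. That step usefully justifies what the paper only asserts in its remark that the prior ``follows the classical BFN schedule.''
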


\begin{proposition}[Posterior belief parameter as a coupled energy minimizer]
\label{prop:energy}
Under the assumptions of Theorem~\ref{thm:joint_update}, the posterior belief parameter satisfies
\begin{equation}
\begin{aligned}
\bm\theta_t
=
\arg\min_{\bm u\in\mathbb{R}^{D}}
\Big\{
\frac{1}{2}\|&\bm u-\bm\theta_0\|^2_{\bm\Omega_{\mathrm{prior}}}
+
\frac{\beta(t)}{2}\|\bm y-\bm u\|^2_{\bm\Omega_{\mathrm{obs}}}
\Big\},
\\
&\|\bm v\|^2_{\bm\Omega}\triangleq \bm v^\top \bm\Omega \bm v.
\label{eq:method_energy}
\end{aligned}
\end{equation}
If $\bm\Omega_{\mathrm{prior}}$ is Laplacian-regularized as in Eq.~\eqref{eq:method_omega_prior}, the first term in Eq.~\eqref{eq:method_energy} contains the Dirichlet form Eq.~\eqref{eq:method_dirichlet} and penalizes structurally incoherent variation across representation-coupled variables, while the second term drives the belief parameter towards the data distribution.
\end{proposition}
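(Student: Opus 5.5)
The plan is to treat the objective in Eq.~\eqref{eq:method_energy} as a strictly convex quadratic and identify its unique minimizer with the solution of the SPD system Eq.~\eqref{eq:method_post_system} from Theorem~\ref{thm:joint_update}. Write
\begin{equation*}
E(\bm u)=\tfrac12(\bm u-\bm\theta_0)^\top\bm\Omega_{\mathrm{prior}}(\bm u-\bm\theta_0)+\tfrac{\beta(t)}{2}(\bm y-\bm u)^\top\bm\Omega_{\mathrm{obs}}(\bm y-\bm u).
\end{equation*}
Its Hessian is $\bm\Omega_{\mathrm{prior}}+\beta(t)\bm\Omega_{\mathrm{obs}}=\bm P_t$. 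Theorem~\ref{thm:joint_update} gives $\bm P_t\succ\bm 0$, using $\beta(t)\ge 0$, which holds because $\alpha>0$ and Eq.~\eqref{eq:method_beta} applies. Hence $E$ is strictly convex and coercive, so it has exactly one minimizer, and that minimizer is characterized by the first-order condition.

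Next I will compute the gradient, using symmetry of both precisions:
\begin{equation*}
\nabla E(\bm u)=\bm\Omega_{\mathrm{prior}}(\bm u-\bm\theta_0)-\beta(t)\bm\Omega_{\mathrm{obs}}(\bm y-\bm u).
\end{equation*}
Setting $\nabla E(\bm u)=\bm 0$ gives $\bm P_t\bm u=\bm\Omega_{\mathrm{prior}}\bm\theta_0+\beta(t)\bm\Omega_{\mathrm{obs}}\bm y$. This is exactly Eq.~\eqref{eq:method_post_system}, whose unique solution is $\bm\theta_t$ by Theorem~\ref{thm:joint_update}, so the argmin equals $\bm\theta_t$.

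As a cross-check, $E(\bm u)$ is, up to an additive constant, the negative log of $p_{\mathrm{prior}}(\bm u)\,p_{\mathrm{S}}(\bm y\mid\bm u;t)$. The minimizer is therefore the posterior mode, which for a Gaussian is its mean.

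For the interpretive claim I will substitute Eq.~\eqref{eq:method_omega_prior} and set $\bm s=\bm M(\bm u-\bm\theta_0)$. Since $\bm M$ is a diagonal mask and hence symmetric, the first term splits as
\begin{equation*}
\tfrac12\|\bm u-\bm\theta_0\|^2_{\bm\Omega_{\mathrm{prior}}}=\tfrac12\mathcal{J}(\bm s)+\tfrac{\varepsilon}{2}\|\bm u-\bm\theta_0\|^2 .
\end{equation*}
By Eq.~\eqref{eq:method_dirichlet}, $\mathcal{J}(\bm s)$ is a nonnegative weighted sum of squared differences $(s_u-s_v)^2$ over the edges of $\mathcal{H}$. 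It therefore penalizes incoherent variation between variables coupled by incidence or symmetry. The second term of $E$ is a $\bm\Omega_{\mathrm{obs}}$-weighted data-fidelity term pulling $\bm u$ toward the message $\bm y$.

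There is no real obstacle here. The only points needing care are symmetry of the precisions, which makes the gradient formula valid, and nonnegativity of $\beta(t)$, which ensures positive definiteness even at $t=0$. Also, the Dirichlet identity is applied to the masked, shifted signal $\bm s$ rather than to $\bm u$ itself.
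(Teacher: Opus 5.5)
Your proposal is correct and follows essentially the same route as the paper: write out the quadratic, set its gradient to zero to recover the SPD system $\bm P_t\bm u=\bm\Omega_{\mathrm{prior}}\bm\theta_0+\beta(t)\bm\Omega_{\mathrm{obs}}\bm y$ of Theorem~\ref{thm:joint_update}, and use $\bm P_t\succ\bm 0$ to get strict convexity and a unique minimizer. Your explicit split of the first term via $\bm s=\bm M(\bm u-\bm\theta_0)$ into $\tfrac12\mathcal{J}(\bm s)+\tfrac{\varepsilon}{2}\|\bm u-\bm\theta_0\|^2$ is a useful addition, since the paper's proof does not verify the interpretive Dirichlet-form claim.
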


\subsection{Training Objective}
\label{subsec:method_loss}

The variational objective matches sender and receiver messages in KL.
Although the KL between general Gaussians contains trace and log-determinant terms, tying the channel covariance in Eqs.~\eqref{eq:method_sender} and~\eqref{eq:method_receiver} cancels those terms and yields a tractable structured quadratic.

A technical point concerns the definiteness of the induced quadratic.
Throughout, we require $\bm\Omega_{\mathrm{obs}}\succ \bm 0$ so that $\|\cdot\|_{\bm\Omega_{\mathrm{obs}}}$ is a true norm.
When $\bm\Omega_{\mathrm{obs}}$ is instantiated from a Laplacian-based operator, we include a diagonal regularization term to remove the Laplacian null space and prevent spectral collapse.
The concrete construction is in Appendix~\ref{app:obs_precision}.

\begin{proposition}[Structured KL under tied channel covariance]
\label{prop:structured_kl}
For any fixed $t$ and state $\bm\theta_t$,
\begin{equation}
\begin{aligned}
&D_{\mathrm{KL}}\Big(
p_{\mathrm{S}}(\cdot\mid \bm z;t)
\ \Big\|\ 
p_{\mathrm{R}}(\cdot\mid \bm\theta_t;t)
\Big)
\\
=&
\frac{\beta(t)}{2}
\big\|\bm z-\widehat{\bm z}_\phi(\bm\theta_t,t)\big\|^2_{\bm\Omega_{\mathrm{obs}}}.
\label{eq:method_kl_quad}
\end{aligned}
\end{equation}
\end{proposition}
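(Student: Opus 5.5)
We will prove Proposition~\ref{prop:structured_kl} by specializing the closed-form KL divergence between two multivariate Gaussians to the tied-covariance setting of Eqs.~\eqref{eq:method_sender} and~\eqref{eq:method_receiver}. Recall that for $\mathcal{N}(\bm m_1,\bm S_1)$ and $\mathcal{N}(\bm m_2,\bm S_2)$ on $\mathbb{R}^D$,
\begin{equation*}
D_{\mathrm{KL}}
=
\tfrac12\Big[\operatorname{tr}(\bm S_2^{-1}\bm S_1)-D+(\bm m_2-\bm m_1)^\top\bm S_2^{-1}(\bm m_2-\bm m_1)+\log\tfrac{\det\bm S_2}{\det\bm S_1}\Big].
\end{equation*}
We will take $\bm m_1=\bm z$, $\bm m_2=\widehat{\bm z}_\phi(\bm\theta_t,t)$, and $\bm S_1=\bm S_2=\bm S\triangleq(\beta(t)\bm\Omega_{\mathrm{obs}})^{-1}$.

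The first step is to check that both distributions are well defined for $t>0$. Since $\bm\Omega_{\mathrm{obs}}\succ\bm 0$ and $\beta(t)>0$ for $t>0$, because $\alpha>0$ in Eq.~\eqref{eq:method_beta}, the matrix $\beta(t)\bm\Omega_{\mathrm{obs}}$ is SPD. Hence $\bm S$ exists and is SPD.

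With equal covariances, the trace term becomes $\operatorname{tr}(\bm I)=D$, which cancels the $-D$, and the log-determinant ratio vanishes. What remains is the Mahalanobis term
\begin{equation*}
\tfrac12(\widehat{\bm z}_\phi-\bm z)^\top\beta(t)\bm\Omega_{\mathrm{obs}}(\widehat{\bm z}_\phi-\bm z).
\end{equation*}
Pulling out the scalar $\beta(t)$ and using the norm notation $\|\bm v\|^2_{\bm\Omega}=\bm v^\top\bm\Omega\bm v$ from Proposition~\ref{prop:energy}, together with the symmetry $\|\bm v\|_{\bm\Omega}=\|-\bm v\|_{\bm\Omega}$, gives Eq.~\eqref{eq:method_kl_quad}.

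There is no real obstacle here; the only delicate point is the endpoint $t=0$. There $\beta(0)=0$, so the channel covariance is infinite and the Gaussians are degenerate. We will handle this by stating the identity for $t\in(0,1]$ and extending it by continuity: the right-hand side tends to $0$ as $t\to0$. This is consistent with interpreting both messages as uninformative at $t=0$. The point has no effect on the integral in the objective, since $\{0\}$ has $d\beta$-measure zero.
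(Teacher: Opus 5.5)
Your proposal is correct and takes essentially the same route as the paper: specialize the closed-form Gaussian KL to the tied covariance $(\beta(t)\bm\Omega_{\mathrm{obs}})^{-1}$, cancel the trace term against $-D$, drop the vanishing log-determinant ratio, and read off the Mahalanobis term. Your extra remarks go slightly beyond the paper, which leaves both points implicit: that the channel is well defined for $t>0$, and that the degenerate endpoint $t=0$ is a convention that carries no $d\beta$-measure.
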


Substituting Eq.~\eqref{eq:method_kl_quad} into the flow objective Eq.~\eqref{eq:bg_bfn_objective} yields
\begin{equation}
\begin{aligned}
\mathcal{L}_{\infty}(\phi)
=
&\mathbb{E}_{\bm z\sim p_{\mathrm{data}}}
\int_0^1
\mathbb{E}_{\bm\theta_t\sim p_F(\cdot\mid \bm z;t)}
\\
&\left[
\frac{\alpha(t)\beta(t)}{2}\,
\big\|\bm z-\widehat{\bm z}_\phi(\bm\theta_t,t)\big\|^2_{\bm\Omega_{\mathrm{obs}}}
\right]dt.
\label{eq:method_Linf}
\end{aligned}
\end{equation}
In particular, the structural geometry affects learning through the weighted quadratic $\|\cdot\|_{\bm\Omega_{\mathrm{obs}}}^{2}$, while the flow distribution $p_F(\cdot\mid \bm z;t)$ is generated by the coupled Bayesian update dynamics in Theorem~\ref{thm:joint_update}.

To make the continuous-time limit explicit, consider a partition $0=t_0<\cdots<t_T=1$, and define $\Delta\beta_k=\beta(t_k)-\beta(t_{k-1})$.
At step $k$, the receiver fuses the next message with the previous belief parameter, and we work in canonical parameters.
In canonical parameters, Theorem~\ref{thm:joint_update} implies the additive recursion
\begin{equation}
\begin{aligned}
\bm P_k &=\bm P_{k-1}+\Delta\beta_k\,\bm\Omega_{\mathrm{obs}},
\\
\bm h_k &=\bm h_{k-1}+\Delta\beta_k\,\bm\Omega_{\mathrm{obs}}\bm y_k,
\\
\bm\theta_k &=\bm P_k^{-1}\bm h_k,
\label{eq:method_nat_recursion}
\end{aligned}
\end{equation}
where $\bm P_0=\bm\Omega_{\mathrm{prior}}$.
The stepwise KL takes the form $\frac{\beta(t_{k-1})}{2}\|\bm z-\widehat{\bm z}_\phi(\bm\theta_{k-1},t_{k-1})\|^2_{\bm\Omega_{\mathrm{obs}}}$ under tied covariance $(\beta(t_{k-1})\bm\Omega_{\mathrm{obs}})^{-1}$.
Weighting each step by $\Delta\beta_k$ yields a Riemann approximation which converges in probability to Eq.~\eqref{eq:method_Linf} as $\max_k(t_k-t_{k-1})\to 0$ and $\Delta\beta_k\approx \alpha(t_k)(t_k-t_{k-1})$.
This yields the continuous-time objective without omitting the natural-parameter step that generates the flow.

Formally, VBFN lifts the variational family from factorized to joint Gaussians. We fix $\bm\Omega_{\mathrm{obs}}$ to enforce a geometric prior, prioritizing the analytic tractability of the continuous-time limit over informative covariance.


\begin{table*}
  \centering
  \caption{Performance comparison on Planar, Tree, and SBM datasets. The best results are highlighted in \textbf{bold} and the second best are \underline{underlined}. Null values (-) in criteria indicate that statistics are unavailable from the original paper.}
  \renewcommand \arraystretch{1.}
  \begin{tabular}{lccccccc}
    \toprule

    \multirow{2}{*}{Model} & \multirow{2}{*}{Class} & \multicolumn{2}{c}{Planar$_{\,\, (|N|=\text{64})}$} & \multicolumn{2}{c}{Tree$_{\,\, (|N|=\text{64})}$} & \multicolumn{2}{c}{SBM$_{\,\, (\text{44}\leq|N|\leq\text{187})}$} \\
    \cmidrule(lr){3-4} \cmidrule(lr){5-6} \cmidrule(lr){7-8}
     & & $\text{V.U.N.}\uparrow$ & Ratio$\downarrow$ & $\text{V.U.N.}\uparrow$ & Ratio$\downarrow$ & $\text{V.U.N.}\uparrow$ & Ratio$\downarrow$  \\
    \midrule
    Training set & - & 100.0 & 1.0 & 100.0 & 1.0 & 100.0 & 1.0  \\
    \midrule
    GraphRNN~\cite{you2018graphrnn} & Autoregressive & 0.0 & 490.2 & 0.0 & 607.0 & 5.0 & 14.7 \\
    GRAN~\cite{liao2019efficient} & Autoregressive & 0.0 & 2.0 & 0.0 & 607.0 & 25.0 & 9.7 \\
    SPECTRE~\cite{martinkus2022spectre} & GAN & 25.0 & 3.0 & - & - & 52.5 & 2.2 \\
    DiGress~\cite{vignac2023digress} & Diffusion & 77.5 & 5.1 & 90.0 & \underline{1.6} & 60.0 & 1.7 \\
    EDGE~\cite{chen2023efficient} & Diffusion & 0.0 & 431.4 & 0.0 & 850.7 & 0.0 & 51.4 \\
    BwR~\cite{diamant2023improving} & Diffusion & 0.0 & 251.9 & 0.0 & 11.4 & 7.5 & 38.6 \\
    BiGG~\cite{dai2020scalable} & Autoregressive & 5.0 & 16.0 & 75.0 & 5.2 & 10.0 & 11.9 \\
    GraphGen~\cite{goyal2020graphgen} & Autoregressive & 7.5 & 210.3 & 95.0 & 33.2 & 5.0 & 48.8 \\
    HSpectre~\cite{bergmeister2024efficient} & Diffusion & 95.0 & 2.1 & \textbf{100.0} & 4.0 & 75.0 & 10.5 \\
    GruM~\cite{jo2024graph} & Diffusion & 90.0 & 1.8 & - & - & 85.0 & \textbf{1.1} \\
    CatFlow~\cite{eijkelboom2024variational} & Flow & 80.0 & - & - & - & 85.0 & - \\
    DisCo~\cite{xu2024discrete} & Diffusion & 83.6 & - & - & - & 66.2 & - \\
    Cometh~\cite{siraudin2025cometh} & Diffusion & \textbf{99.5} & - & - & - & 75.0 & - \\
    SID~\cite{boget2025simple} & Diffusion & 91.3 & - & - & - & 63.5 & - \\
    DeFoG~\cite{qinmadeira2024defog} & Flow & \textbf{99.5} & \underline{1.6} & 96.5 & \underline{1.6} & \textbf{90.0} & 4.9 \\
    TopBF~\cite{xiong2025transport} & Bayesian Flow & 98.3 & 1.8 & 95.2 & 1.7 & 89.2 & 3.3 \\
    \midrule
    VBFN & Bayesian Flow & \textbf{99.5} & \textbf{1.5} & \underline{97.3} & \textbf{1.4} & \underline{89.7} & \underline{1.5} \\
    
    \bottomrule
  \end{tabular}
  \label{tab:1}
\end{table*}

\subsection{Solving SPD Linear System of Eq.~\eqref{eq:method_post_system}}
\label{subsec:method_solver}

The computational core of VBFN is the joint Bayesian update in Eq.~\eqref{eq:method_post_system}, which requires solving an SPD linear system with the fused precision $\bm P_t$.
A dense factorization would cost $O(D^3)$ time and $O(D^2)$ memory, which is prohibitive since $D=Nd_x+N^2d_a$ scales quadratically with $N$ for edge-rich representations.
VBFN is designed so that $\bm\Omega_{\mathrm{prior}}$ and $\bm\Omega_{\mathrm{obs}}$ admit fast matrix–vector products (MVPs), enabling operator-form inference without materializing dense matrices.

For Laplacian-regularized constructions on $\mathcal{H}$, the operator $\bm{\mathcal L}$ has $O(|\mathcal{E}_{\mathcal H}|)$ nonzeros.
Under bounded-degree coupling templates such as incidence and symmetry, $|\mathcal{E}_{\mathcal H}|$ is linear in $D$.
Consequently, an MVP $\bm v\mapsto \bm P_t\bm v$ costs $O(D)$, which is $O(N^2)$ for typical graph representations.

\paragraph{Solver choice.}
We provide two solvers for Eq.~\eqref{eq:method_post_system}.
For small or moderately sized graphs, we can apply a direct Cholesky~\cite{golub2013matrix} factorization to obtain an exact solution.
For scalability, our default solver is Conjugate Gradient (CG)~\cite{hestenes1952methods}, which only requires MVPs and stores a constant number of vectors.
A comparison between CG and Cholesky in accuracy is reported in Sec.~\ref{subsec:ablation_qm9} and Appendix~\ref{app:ablation}.

\paragraph{Iteration complexity and conditioning.}
After $K$ CG iterations, the per-update cost is $O(K\cdot \mathrm{mv}(\bm P_t))$, hence $O(KD)$ under sparse operator realizations~\cite{saad2003iterative}.
The iteration count is governed by the condition number $\kappa(\bm P_t)$. To reach relative residual $\delta$, CG satisfies the standard rate
\begin{equation}
K \ =\ O\Big(\sqrt{\kappa(\bm P_t)}\,\log(1/\delta)\Big).
\label{eq:method_cg_rate}
\end{equation}
The spectral floors in Eqs.~\eqref{eq:method_omega_prior} and~\eqref{eq:app_omega_obs_lap} are therefore not merely for positive definiteness; they also stabilize $\kappa(\bm P_t)$.
Indeed, for any $t$,
\begin{equation}
\kappa(\bm P_t)
\ \le\
\frac{\lambda_{\max}(\bm\Omega_{\mathrm{prior}})+\beta(t)\lambda_{\max}(\bm\Omega_{\mathrm{obs}})}
{\lambda_{\min}(\bm\Omega_{\mathrm{prior}})+\beta(t)\lambda_{\min}(\bm\Omega_{\mathrm{obs}})}.
\label{eq:method_cond_bound_general}
\end{equation}
Under Laplacian-regularized instantiations, $\lambda_{\min}(\bm\Omega_{\mathrm{prior}})\ge \varepsilon$ and $\lambda_{\min}(\bm\Omega_{\mathrm{obs}})\ge \varepsilon_{\mathrm{obs}}$.
When the coupling template keeps the maximum weighted degree of $\mathcal{H}$ bounded, $\lambda_{\max}(\bm{\mathcal L})$ is controlled by that degree, which in turn prevents $\lambda_{\max}(\bm\Omega_{\mathrm{prior}})$ and $\lambda_{\max}(\bm\Omega_{\mathrm{obs}})$ from growing arbitrarily.
In practice, we further apply a Jacobi preconditioner, diagonal scaling to reduce $\kappa(\bm P_t)$ and keep $K$ small~\cite{benzi2002preconditioning,wathen2015preconditioning}.
The wall-clock time and memory comparison is displayed in Appendix~\ref{app:wall_clock}.

\begin{proposition}[Positive definiteness of the update operator]
\label{prop:spd}
If $\bm\Omega_{\mathrm{prior}}\succ\bm 0$, $\bm\Omega_{\mathrm{obs}}\succ\bm 0$, and $\beta(t)\ge 0$, then $\bm P_t=\bm\Omega_{\mathrm{prior}}+\beta(t)\bm\Omega_{\mathrm{obs}}\succ\bm 0$ for all $t\in[0,1]$.
\end{proposition}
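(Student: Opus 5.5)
The plan is to verify the two defining properties of a symmetric positive definite matrix directly from the quadratic form, using only the hypotheses $\bm\Omega_{\mathrm{prior}}\succ\bm 0$, $\bm\Omega_{\mathrm{obs}}\succ\bm 0$ and $\beta(t)\ge 0$.

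\textbf{Symmetry.} Both $\bm\Omega_{\mathrm{prior}}$ and $\bm\Omega_{\mathrm{obs}}$ are symmetric, since positive definiteness in the sense used throughout the paper refers to symmetric matrices. For instance, $\bm M\bm{\mathcal L}\bm M+\varepsilon\bm I$ is symmetric because $\bm{\mathcal L}=\bm\Delta-\bm W$ with $\bm W$ symmetric and $\bm M$ diagonal. A nonnegative combination of symmetric matrices is symmetric, so $\bm P_t$ is symmetric.

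\textbf{Positivity of the quadratic form.} Fix $t\in[0,1]$ and any nonzero $\bm v\in\mathbb{R}^D$. By linearity,
\begin{equation*}
\bm v^\top\bm P_t\bm v=\bm v^\top\bm\Omega_{\mathrm{prior}}\bm v+\beta(t)\,\bm v^\top\bm\Omega_{\mathrm{obs}}\bm v .
\end{equation*}
The first term is strictly positive because $\bm\Omega_{\mathrm{prior}}\succ\bm 0$. The second term is a product of $\beta(t)\ge 0$ and a positive number, so it is nonnegative. Their sum is therefore strictly positive, which gives $\bm P_t\succ\bm 0$.

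\textbf{Checking $\beta(t)\ge 0$ and a quantitative version.} When $\beta$ is defined by the cumulative schedule in Eq.~\eqref{eq:method_beta}, the hypothesis $\beta(t)\ge 0$ holds automatically: $\alpha(s)>0$ and $\beta(t)=\int_0^t\alpha(s)\,ds$ with $t\ge 0$. The quadratic-form argument also yields a quantitative statement. Using the Rayleigh bounds $\bm v^\top\bm\Omega\bm v\ge\lambda_{\min}(\bm\Omega)\|\bm v\|^2$, we obtain
\begin{equation*}
\lambda_{\min}(\bm P_t)\ge\lambda_{\min}(\bm\Omega_{\mathrm{prior}})+\beta(t)\,\lambda_{\min}(\bm\Omega_{\mathrm{obs}})>0 .
\end{equation*}
This is consistent with the conditioning bound in Eq.~\eqref{eq:method_cond_bound_general}.

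There is no real obstacle in this argument. The only point needing care is that the symmetry of $\bm P_t$ is inherited from its two summands, so that $\bm P_t$ qualifies as SPD and not merely as a matrix with a positive quadratic form.
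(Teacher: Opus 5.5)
Your proof is correct and is essentially the paper's argument: expand $\bm v^\top \bm P_t \bm v$ for nonzero $\bm v$, note that the $\bm\Omega_{\mathrm{prior}}$ term is strictly positive and the $\beta(t)$ term is nonnegative, and conclude. Your extra remarks are sound but not needed for the core proof: symmetry is inherited from the summands, $\beta(t)\ge 0$ follows from the cumulative schedule, and Rayleigh quotients give $\lambda_{\min}(\bm P_t)\ge\lambda_{\min}(\bm\Omega_{\mathrm{prior}})+\beta(t)\lambda_{\min}(\bm\Omega_{\mathrm{obs}})$.
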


\vspace{-1pt}
\paragraph{Gradients and memory.}
The solve produces the belief state $\bm\theta_t$ used as input to the predictor $\widehat{\bm z}_{\phi}(\bm\theta_t,t)$.
In the objective Eq.~\eqref{eq:method_Linf}, gradients are taken with respect to $\phi$ only.
We treat $\bm\theta_t$ as a sampled state generated by the fixed training mechanism and do not backpropagate through the solver, avoiding unrolling or implicit differentiation and keeping memory usage comparable to classical BFN training.

\section{Experiments}
\label{sec:experiments}

\subsection{Synthetic Graph Generation}
\label{subsec:exp_synthetic_graph}


\paragraph{Datasets and metrics}
We evaluate VBFN on three widely used synthetic graph generation datasets, including \textbf{Planar}, \textbf{Tree}~\cite{bergmeister2024efficient}, and Stochastic Block Model (\textbf{SBM})~\cite{martinkus2022spectre}.
We measure graph-level validity, uniqueness, and novelty percentage by \textbf{V.U.N.}, and the average ratio graph-theoretic distance of generated and test sets against that of train and test sets by \textbf{Ratio}, following~\cite{qinmadeira2024defog}.


\paragraph{Results}
Table~\ref{tab:1} demonstrates that VBFN achieves a superior balance between graph validity and distributional fidelity, consistently ranking among the top two across all benchmarks. On Planar graphs, VBFN establishes the best Ratio and V.U.N., significantly outperforming other diffusion and flow-based baselines. On the Tree dataset, while HSpectre achieves perfect validity, VBFN attains a substantially better Ratio of 1.4 than 4.0, indicating that our coupled Bayesian update prevents the generation of valid but structurally trivial samples. This robustness extends to the complex SBM dataset, where VBFN secures the second-best results in both metrics, unlike previous methods that often suffer from a trade-off between V.U.N. and Ratio. VBFN consistently occupies the top tier, validating the efficacy of Bayesian flow in capturing high-order geometric dependencies.

\begin{table*}
  \centering
  \caption{Performance comparison on QM9 and ZINC250k datasets. The best results are highlighted in \textbf{bold} and the second best are \underline{underlined}. Null values (-) in criteria indicate that statistics are unavailable from the original paper.}
  \renewcommand \arraystretch{1.}
  \begin{tabular}{lcccccccc}
    \toprule

    \multirow{2}{*}{Method} & \multicolumn{4}{c}{QM9$_{\,\, (|N|\leq\text{9})}$} & \multicolumn{4}{c}{ZINC250k$_{\,\, (|N|\leq\text{38})}$} \\
    \cmidrule(lr){2-5} \cmidrule(lr){6-9}
     & $\text{Valid}\uparrow$ & FCD$\downarrow$ & NSPDK$\downarrow$ & Scaf.$\uparrow$ & $\text{Valid}\uparrow$ & FCD$\downarrow$ & NSPDK$\downarrow$ & Scaf.$\uparrow$ \\
    \midrule
    Training set & 100.00 & 0.0398 & 0.0001 & 0.9717 & 100.00 & 0.0615 & 0.0001 & 0.8395 \\
    \midrule
    GDSS~\cite{jo2022score} & 95.72 & 2.900 & 0.0033 & 0.6983 & 97.01 & 14.656 & 0.0195 & 0.0467 \\
    DiGress~\cite{vignac2023digress} & 98.19 & 0.095 & 0.0003 & 0.9353 & 94.99 & 3.482 & 0.0021 & 0.4163 \\
    GSDM~\cite{luo2023fast} & \underline{99.90} & 2.614 & 0.0034 & - & 92.57 & 12.435 & 0.0168 & - \\
    LGD-large~\cite{zhou2024unifying} & 99.13 & 0.104 & \underline{0.0002} & - & - & - & - & - \\
    GruM~\cite{jo2024graph} & 99.69 & 0.108 & \underline{0.0002} & \underline{0.9449} & 98.65 & 2.257 & 0.0015 & 0.5299 \\
    SID~\cite{boget2025simple} & 99.67 & 0.504 & \textbf{0.0001} & - & \textbf{99.50} & 2.010 & 0.0021 & - \\
    DeFoG~\cite{qinmadeira2024defog} & 99.30 & 0.120 & - & - & 99.22 & \underline{1.425} & \underline{0.0008} & \underline{0.5903} \\
    TopBF~\cite{xiong2025transport} & 99.74 & \underline{0.093} & \underline{0.0002} & 0.9421 & 99.37 & \textbf{1.392} & \underline{0.0008} & 0.5372 \\
    \midrule
    VBFN & \textbf{99.98} & \textbf{0.083} & \underline{0.0002} & \textbf{0.9519} & \textbf{99.63} & \textbf{1.307} & \textbf{0.0007} & \textbf{0.6057} \\
    
    \bottomrule
  \end{tabular}
  \label{tab:2}
\end{table*}

\subsection{Molecular Graph Generation}
\label{subsec:exp_2d_mol}


\paragraph{Datasets and metrics}
We conduct experiments on two standard benchmarks: QM9~\cite{ramakrishnan2014quantum}, consisting of small molecules with up to 9 heavy atoms labeled with quantum chemical properties, and ZINC250k~\cite{irwin2012zinc}, a more challenging dataset containing 250k drug-like commercially available molecules with up to 38 atoms. Performance is assessed using four key metrics. \textbf{Valid} denotes the percentage of generated graphs that represent chemically valid molecules. \textbf{FCD} (Fréchet ChemNet Distance)~\cite{preuer2018frechet} measures the distance between generated and real distributions in the chemical feature space. Neighborhood Subgraph Pairwise Distance Kernel (\textbf{NSPDK}) maximum mean discrepancy (MMD)~\cite{costa2010fast} quantifies the structural discrepancy using the Neighborhood Subgraph Pairwise Distance Kernel. Scaffold similarity (\textbf{Scaf.})~\cite{bemis1996properties} evaluates the diversity and realism of the generated molecular substructures compared to the training set.


\paragraph{Results}
As shown in Table~\ref{tab:2}, VBFN achieves state-of-the-art performance across both datasets. On QM9, VBFN reaches near-perfect validity by 99.98\%, the lowest FCD by 0.083, and the best scaffold similarity by 0.9519, surpassing previous baselines. The advantages of VBFN are clearer on the larger and more complex ZINC250k dataset. VBFN not only achieves the highest validity by 99.63\% but also reduces FCD to 1.307 and NSPDK to 0.0007. Crucially, VBFN attains the highest Scaffold similarity by 0.6057, indicating that our coupled Bayesian update mechanism effectively captures long-range dependencies and complex functional groups. This confirms VBFN can generate chemically plausible molecules with superior structural fidelity.



\begin{table}[t]
\centering
\small
\caption{QM9 ablations for VBFN. * denotes default setting.}
\setlength{\tabcolsep}{5pt}
\begin{tabular}{lcccc}
\toprule
Setting & Valid $\uparrow$ & FCD $\downarrow$ & NSPDK $\downarrow$ & Scaf.$\uparrow$ \\
\midrule
$\lambda_X{=}0,\lambda_A{=}0$ & 99.51 & 25.926 & 1.1126 & 0.0 \\
$\lambda_X{=}1,\lambda_A{=}0$ & 0.01 & 16.951 & 0.2289 & 0.0 \\
$\lambda_X{=}0,\lambda_A{=}1$ & 51.34 & 17.171 & 0.0472 & 0.5884 \\
$\lambda_X{=}1,\lambda_A{=}1$* & 99.98 & 0.083 & 0.0002 & 0.9519 \\
\midrule
$\bm\Omega_{\mathrm{obs}}$: prior & 42.71 & 8.717 & 0.0402 & 0.1893 \\
$\bm\Omega_{\mathrm{obs}}$: diag\_prior* & 99.98 & 0.083 & 0.0002 & 0.9519 \\
\midrule
Cholesky & 99.98 & 0.084 & 0.0002 & 0.9502 \\
CG* & 99.98 & 0.083 & 0.0002 & 0.9519 \\
\bottomrule
\end{tabular}
\label{tab:ablation_qm9}
\end{table}

\subsection{Ablation Studies}
\label{subsec:ablation_qm9}

As shown in Table~\ref{tab:ablation_qm9}, we conduct three ablation studies on QM9 to assess the roles of precision-induced coupling, the observation-channel design, and the linear solver in VBFN. Detailed ablation studies are in Appendix~\ref{app:ablation}.

\paragraph{Where coupling matters.}
The full model ($\lambda_X{=}\lambda_A{=}1$) achieves strong validity and better FCD/NSPDK with high scaffold similarity.
In contrast, disabling coupling severely degrades global statistics in FCD/NSPDK and collapses scaffold similarity, despite high validity under $\lambda_X{=}\lambda_A{=}0$.
Single-sided coupling is not sufficient under the same configuration. Coupling only $A$ partially restores structural signals (notably Scaf.), whereas coupling only $X$ fails to produce valid molecules.
Overall, these results support that chemically plausible graphs require joint propagation between node and edge degrees of freedom, which is realized in VBFN through the coupled Bayesian update.

\paragraph{Prior-only versus structured channel.}
Setting $\bm\Omega_{\mathrm{obs}}=\bm\Omega_{\mathrm{prior}}$ substantially hurts validity and alignment of using a diagonal observation precision.
When $\bm\Omega_{\mathrm{obs}}=\bm\Omega_{\mathrm{prior}}$, we have $\bm P_t=(1+\beta(t))\bm\Omega_{\mathrm{prior}}$, so the structured precision cancels in the update and $\bm\theta_t$ reduces to a simple convex combination of $\bm\theta_0$ and $\bm y$, weakening prior-induced propagation.
A diagonal channel avoids this collapse and injects evidence in a more stable, coordinate-local manner while preserving coupling through $\bm\Omega_{\mathrm{prior}}$.

\paragraph{Linear solver.}
CG and Cholesky yield essentially identical generation metrics when solving the same SPD system to comparable accuracy, confirming that solver choice does not affect the learned generative behavior.
We therefore use CG as the default implementation, while Cholesky remains a viable option for small graphs.

\section{Conclusion}
\label{sec:conclusion}

We introduced VBFN for graph generation by replacing the independent coding geometry of classical BFNs with a joint Gaussian variational belief governed by structured precisions.
By building a representation-induced dependency graph without querying authentic adjacency values, VBFN instantiates Laplacian-based precisions coupling node and edge variables while avoiding label leakage.
Under tied message covariance, we retain the classical message–matching objective, but the belief evolution is induced by a joint Bayesian update that reduces to solving an SPD linear system at each time step.
Empirically, VBFN improves fidelity and structural coherence across synthetic and molecular graphs, supporting the role of precision-induced coupling as an intrinsic mechanism for geometric belief propagation.

\section*{Impact Statement}
This paper develops a variational Bayesian flow framework for generating graph-structured data with coupled node--edge updates. 
The main intended impact is methodological, and we expect it to support downstream scientific applications that rely on discrete graph generation, including molecular design in drug discovery. 
Better generative models can help narrow the search space of candidate molecules and prioritize compounds for subsequent computational screening, which may reduce experimental cost and shorten iteration cycles. 
At the same time, any improvement in molecular generation can be dual-use. 
Inappropriate use could include generating candidates that are harmful to humans or the environment if combined with additional domain knowledge and synthesis-oriented tooling. 
Our work does not provide synthesis instructions or target–specific hazardous objectives, and we encourage responsible use with standard screening and access controls in downstream pipelines.


\bibliography{ref}
\bibliographystyle{icml2026}

\clearpage

\appendix
\onecolumn

\section{Proofs and Derivations}
\label{app:proofs_and_derivations}

\subsection{Proof of Theorem~\ref{thm:joint_update}}
\label{app:proof_joint_update}

\begin{proof}
By Bayesian update,
\begin{equation}
p(\bm z\mid \bm y;t)
\ \propto\
p_{\mathrm{prior}}(\bm z)\,p_{\mathrm{S}}(\bm y\mid \bm z;t).
\label{eq:app_bayes}
\end{equation}
Under the theorem assumptions,
$
p_{\mathrm{prior}}(\bm z)=\mathcal{N}(\bm z;\bm\theta_0,\bm\Omega_{\mathrm{prior}}^{-1})
$
and
$
p_{\mathrm{S}}(\bm y\mid \bm z;t)=\mathcal{N}(\bm y;\bm z,(\beta(t)\bm\Omega_{\mathrm{obs}})^{-1})
$.
Ignoring additive constants independent of $\bm z$, their log-densities as functions of $\bm z$ are
\begin{align}
\log p_{\mathrm{prior}}(\bm z)
&=
-\tfrac12(\bm z-\bm\theta_0)^\top \bm\Omega_{\mathrm{prior}}(\bm z-\bm\theta_0)+\mathrm{const},
\label{eq:app_prior_expand_full}\\
\log p_{\mathrm{S}}(\bm y\mid \bm z;t)
&=
-\tfrac12(\bm y-\bm z)^\top \big(\beta(t)\bm\Omega_{\mathrm{obs}}\big)(\bm y-\bm z)+\mathrm{const}.
\label{eq:app_sender_expand_full}
\end{align}
Summing Eq.~\eqref{eq:app_prior_expand_full} and Eq.~\eqref{eq:app_sender_expand_full} and expanding all quadratic terms in $\bm z$ yields
\begin{align}
\log p(\bm z\mid \bm y;t)
&=
-\tfrac12 \bm z^\top \bm P_t \bm z
+\bm z^\top \bm h_t
+\mathrm{const},
\label{eq:app_canonical_full}
\end{align}
where
\begin{equation}
\bm P_t \triangleq \bm\Omega_{\mathrm{prior}}+\beta(t)\bm\Omega_{\mathrm{obs}},
\qquad
\bm h_t \triangleq \bm\Omega_{\mathrm{prior}}\bm\theta_0+\beta(t)\bm\Omega_{\mathrm{obs}}\bm y.
\label{eq:app_Ph_full}
\end{equation}
Since $\bm P_t\succ \bm 0$ (Proposition~\ref{prop:spd}), define $\bm\theta_t \triangleq \bm P_t^{-1}\bm h_t$.

Now rewrite the quadratic form by shifting $\bm z$ around $\bm\theta_t$.
Let $\bm r \triangleq \bm z-\bm\theta_t$ (equivalently $\bm z=\bm r+\bm\theta_t$). Then
\begin{align}
-\tfrac12 \bm z^\top \bm P_t \bm z + \bm z^\top \bm h_t
&=
-\tfrac12(\bm r+\bm\theta_t)^\top \bm P_t (\bm r+\bm\theta_t)
+(\bm r+\bm\theta_t)^\top \bm h_t \\
&=
-\tfrac12 \bm r^\top \bm P_t \bm r
-\bm r^\top \bm P_t \bm\theta_t
-\tfrac12 \bm\theta_t^\top \bm P_t \bm\theta_t
+\bm r^\top \bm h_t
+\bm\theta_t^\top \bm h_t.
\label{eq:app_shift_expand_simple}
\end{align}
Because $\bm P_t\bm\theta_t=\bm h_t$, the two terms linear in $\bm r$ cancel:
\begin{equation}
-\bm r^\top \bm P_t \bm\theta_t+\bm r^\top \bm h_t
=
-\bm r^\top \bm h_t+\bm r^\top \bm h_t
=0,
\end{equation}
and also $\bm\theta_t^\top \bm h_t=\bm\theta_t^\top \bm P_t \bm\theta_t$. Therefore Eq.~\eqref{eq:app_shift_expand_simple} becomes
\begin{equation}
-\tfrac12 \bm z^\top \bm P_t \bm z + \bm z^\top \bm h_t
=
-\tfrac12(\bm z-\bm\theta_t)^\top \bm P_t (\bm z-\bm\theta_t)
+\tfrac12 \bm\theta_t^\top \bm P_t \bm\theta_t.
\label{eq:app_complete_square_full}
\end{equation}
The last term is constant with respect to $\bm z$, hence
\begin{equation}
p(\bm z\mid \bm y;t)
\ \propto\
\exp\!\Big(-\tfrac12(\bm z-\bm\theta_t)^\top \bm P_t (\bm z-\bm\theta_t)\Big).
\label{eq:app_posterior_kernel}
\end{equation}
This is exactly the Gaussian kernel with mean $\bm\theta_t$ and covariance $\bm P_t^{-1}$, so
\begin{equation}
p(\bm z\mid \bm y;t)=\mathcal{N}(\bm z;\bm\theta_t,\bm P_t^{-1}).
\end{equation}
Finally, multiplying $\bm\theta_t=\bm P_t^{-1}\bm h_t$ by $\bm P_t$ gives the linear system form in Theorem~\ref{thm:joint_update}, completing the proof.
\end{proof}

\subsection{Proof of Proposition~\ref{prop:diag_reduction}}
\label{app:proof_diag_reduction}

\begin{proof}
Assume $\bm\Omega_{\mathrm{prior}}=\mathrm{diag}(\bm\omega_{\mathrm{prior}})$ and $\bm\Omega_{\mathrm{obs}}=\mathrm{diag}(\bm\omega_{\mathrm{obs}})$.
Then
\begin{equation}
\bm P_t
=
\bm\Omega_{\mathrm{prior}}+\beta(t)\bm\Omega_{\mathrm{obs}}
=
\mathrm{diag}\!\big(\omega_{\mathrm{prior},1}+\beta(t)\omega_{\mathrm{obs},1},\ldots,
\omega_{\mathrm{prior},D}+\beta(t)\omega_{\mathrm{obs},D}\big)
\label{eq:app_diag_P}
\end{equation}
is diagonal, hence invertible element-wise.
Theorem~\ref{thm:joint_update} gives
\begin{equation}
\bm P_t \bm\theta_t = \bm\Omega_{\mathrm{prior}}\bm\theta_0+\beta(t)\bm\Omega_{\mathrm{obs}}\bm y.
\label{eq:app_diag_system}
\end{equation}
Taking the $i$-th entry of Eq.~\eqref{eq:app_diag_system} yields the scalar update rule
\begin{equation}
\big(\omega_{\mathrm{prior},i}+\beta(t)\omega_{\mathrm{obs},i}\big)\theta_{t,i}
=
\omega_{\mathrm{prior},i}\theta_{0,i}+\beta(t)\omega_{\mathrm{obs},i}y_i,
\label{eq:app_scalar_fusion}
\end{equation}
so
\begin{equation}
\theta_{t,i}
=
\frac{\omega_{\mathrm{prior},i}\theta_{0,i}+\beta(t)\omega_{\mathrm{obs},i}y_i}
{\omega_{\mathrm{prior},i}+\beta(t)\omega_{\mathrm{obs},i}}.
\label{eq:app_scalar_solution}
\end{equation}
This is exactly index-wise Gaussian conjugate fusion: each dimension is updated independently using a weighted average whose weights are precisions.
When $\bm\omega_{\mathrm{obs}}=\bm 1$ and $\bm\omega_{\mathrm{prior}}$ follows the classical BFN schedule, Eq.~\eqref{eq:app_scalar_solution} recovers the classical factorized-Gaussian BFN update.
\end{proof}

\subsection{Proof of Proposition~\ref{prop:energy}}
\label{app:proof_energy}

\begin{proof}
Define the objective
\begin{equation}
J(\bm u)
=
\frac12(\bm u-\bm\theta_0)^\top \bm\Omega_{\mathrm{prior}}(\bm u-\bm\theta_0)
+
\frac{\beta(t)}{2}(\bm y-\bm u)^\top \bm\Omega_{\mathrm{obs}}(\bm y-\bm u).
\label{eq:app_energy_obj}
\end{equation}
Expanding the two quadratic forms gives
\begin{equation}
J(\bm u)
=
\frac12 \bm u^\top \bm\Omega_{\mathrm{prior}}\bm u
-\bm u^\top \bm\Omega_{\mathrm{prior}}\bm\theta_0
+\frac12 \bm\theta_0^\top \bm\Omega_{\mathrm{prior}}\bm\theta_0 
+\frac{\beta(t)}{2}\bm u^\top \bm\Omega_{\mathrm{obs}}\bm u
-\beta(t)\bm u^\top \bm\Omega_{\mathrm{obs}}\bm y
+\frac{\beta(t)}{2}\bm y^\top \bm\Omega_{\mathrm{obs}}\bm y.
\label{eq:app_energy_expand}
\end{equation}
Differentiating with respect to $\bm u$ and using symmetry of $\bm\Omega_{\mathrm{prior}},\bm\Omega_{\mathrm{obs}}$ yields
\begin{equation}
\nabla_{\bm u}J(\bm u)
=
\bm\Omega_{\mathrm{prior}}\bm u-\bm\Omega_{\mathrm{prior}}\bm\theta_0
+\beta(t)\bm\Omega_{\mathrm{obs}}\bm u-\beta(t)\bm\Omega_{\mathrm{obs}}\bm y.
\label{eq:app_energy_grad}
\end{equation}
Setting $\nabla_{\bm u}J(\bm u)=\bm 0$ gives
\begin{equation}
\big(\bm\Omega_{\mathrm{prior}}+\beta(t)\bm\Omega_{\mathrm{obs}}\big)\bm u
=
\bm\Omega_{\mathrm{prior}}\bm\theta_0+\beta(t)\bm\Omega_{\mathrm{obs}}\bm y,
\label{eq:app_energy_optimality}
\end{equation}
which is exactly the linear system in Theorem~\ref{thm:joint_update}.
Since $\bm\Omega_{\mathrm{prior}}+\beta(t)\bm\Omega_{\mathrm{obs}}\succ\bm 0$ (Proposition~\ref{prop:spd}), $J$ is strictly convex and the minimizer is unique; it equals the posterior belief parameter $\bm\theta_t$.
\end{proof}

\subsection{Proof of Proposition~\ref{prop:structured_kl}}
\label{app:proof_structured_kl}

\begin{proof}
Let
\[
p_{\mathrm{S}}(\bm y)=\mathcal{N}(\bm y;\bm m_1,\bm\Sigma),
\qquad
p_{\mathrm{R}}(\bm y)=\mathcal{N}(\bm y;\bm m_2,\bm\Sigma),
\]
with a common covariance $\bm\Sigma\succ\bm 0$.
The KL divergence between two Gaussians is
\begin{equation}
D_{\mathrm{KL}}\!\Big(\mathcal{N}(\bm m_1,\bm\Sigma)\ \Big\|\ \mathcal{N}(\bm m_2,\bm\Sigma)\Big)
=
\frac12\Big[
\mathrm{tr}(\bm\Sigma^{-1}\bm\Sigma)
+(\bm m_2-\bm m_1)^\top \bm\Sigma^{-1}(\bm m_2-\bm m_1)
-D
+\log\frac{\det\bm\Sigma}{\det\bm\Sigma}
\Big].
\label{eq:app_kl_samecov}
\end{equation}
Now,
$\mathrm{tr}(\bm\Sigma^{-1}\bm\Sigma)=\mathrm{tr}(\bm I)=D$,
and $\log(\det\bm\Sigma/\det\bm\Sigma)=0$.
Hence
\begin{equation}
D_{\mathrm{KL}}\!\Big(\mathcal{N}(\bm m_1,\bm\Sigma)\ \Big\|\ \mathcal{N}(\bm m_2,\bm\Sigma)\Big)
=
\frac12(\bm m_2-\bm m_1)^\top \bm\Sigma^{-1}(\bm m_2-\bm m_1).
\label{eq:app_kl_quad}
\end{equation}
In VBFN, Proposition~\ref{prop:structured_kl} assumes tied channel covariance
$\bm\Sigma=(\beta(t)\bm\Omega_{\mathrm{obs}})^{-1}$,
and the message means
$\bm m_1=\bm z$,
$\bm m_2=\widehat{\bm z}_\phi(\bm\theta_t,t)$.
Substituting into Eq.~\eqref{eq:app_kl_quad} gives
\begin{align}
D_{\mathrm{KL}}\!\Big(
p_{\mathrm{S}}(\cdot\mid \bm z;t)\ \Big\|\ p_{\mathrm{R}}(\cdot\mid \bm\theta_t;t)
\Big)
&=
\frac12\big(\widehat{\bm z}_\phi(\bm\theta_t,t)-\bm z\big)^\top
\big(\beta(t)\bm\Omega_{\mathrm{obs}}\big)
\big(\widehat{\bm z}_\phi(\bm\theta_t,t)-\bm z\big) \\
&=
\frac{\beta(t)}{2}
\big\|\bm z-\widehat{\bm z}_\phi(\bm\theta_t,t)\big\|^2_{\bm\Omega_{\mathrm{obs}}}.
\end{align}
which is exactly the weighted quadratic form stated in the proposition.
\end{proof}

\subsection{Proof of Proposition~\ref{prop:spd}}
\label{app:proof_spd}

\begin{proof}
For any nonzero $\bm v\in\mathbb{R}^{D}$,
\begin{equation}
\bm v^\top\big(\bm\Omega_{\mathrm{prior}}+\beta(t)\bm\Omega_{\mathrm{obs}}\big)\bm v
=
\bm v^\top \bm\Omega_{\mathrm{prior}}\bm v
+
\beta(t)\,\bm v^\top \bm\Omega_{\mathrm{obs}}\bm v.
\label{eq:app_spd_sum}
\end{equation}
Since $\bm\Omega_{\mathrm{prior}}\succ \bm 0$, we have $\bm v^\top \bm\Omega_{\mathrm{prior}}\bm v>0$.
Since $\bm\Omega_{\mathrm{obs}}\succ \bm 0$ and $\beta(t)\ge 0$, the second term is nonnegative.
Therefore, the sum is strictly positive for all $\bm v\neq \bm 0$, proving the matrix is positive definite.
\end{proof}

\subsection{Discrete-to-continuous Derivation of $\mathcal{L}_\infty$}
\label{app:ct_limit}

We derive the continuous-time objective as the limit of the discrete-time message–matching ELBO.

\paragraph{Discrete partition.}
Let $0=t_0<t_1<\cdots<t_T=1$ be a partition with $\Delta t_k=t_k-t_{k-1}$.
Define $\Delta\beta_k=\beta(t_k)-\beta(t_{k-1})$.
For sufficiently fine partitions, $\Delta\beta_k=\alpha(\xi_k)\Delta t_k$ for some $\xi_k\in(t_{k-1},t_k)$ by the mean-value theorem.

\paragraph{Stepwise KL.}
At step $k$, the flow objective weights the KL at time $t_{k-1}$ by $\Delta\beta_k$.
Under tied covariance $(\beta(t_{k-1})\bm\Omega_{\mathrm{obs}})^{-1}$, Proposition~\ref{prop:structured_kl} gives
\begin{equation}
D_{\mathrm{KL}}^{(k)}
=
\frac{\beta(t_{k-1})}{2}
\left\|\bm z-\widehat{\bm z}_\phi(\bm\theta_{t_{k-1}},t_{k-1})\right\|_{\bm\Omega_{\mathrm{obs}}}^{2}.
\label{eq:app_step_kl}
\end{equation}

\paragraph{Discrete objective.}
Summing along the flow yields the discrete-time objective
\begin{equation}
\mathcal{L}_T(\phi)
=
\mathbb{E}_{\bm z\sim p_{\mathrm{data}}}
\ \mathbb{E}_{\bm\theta_{t_{0:T}}\sim p_F(\cdot\mid \bm z)}
\left[
\sum_{k=1}^{T}
\frac{\beta(t_{k-1})\,\Delta\beta_k}{2}
\left\|\bm z-\widehat{\bm z}_\phi(\bm\theta_{t_{k-1}},t_{k-1})\right\|_{\bm\Omega_{\mathrm{obs}}}^{2}
\right].
\label{eq:app_LT_full}
\end{equation}

\paragraph{Riemann limit.}
As $\max_k \Delta t_k\to 0$, we have $\Delta\beta_k=\alpha(\xi_k)\Delta t_k$, and the sum in Eq.~\eqref{eq:app_LT_full} converges (in the Riemann–Stieltjes sense) to
\begin{equation}
\mathcal{L}_\infty(\phi)
=
\mathbb{E}_{\bm z\sim p_{\mathrm{data}}}
\int_0^1
\mathbb{E}_{\bm\theta_t\sim p_F(\cdot\mid \bm z;t)}
\left[
\frac{\alpha(t)\beta(t)}{2}
\left\|\bm z-\widehat{\bm z}_\phi(\bm\theta_t,t)\right\|_{\bm\Omega_{\mathrm{obs}}}^{2}
\right]dt,
\label{eq:app_Linf_full}
\end{equation}
which matches Eq.~\eqref{eq:method_Linf} in the main text.

\section{Implementation Details}
\label{app:implementation_details}

\subsection{Continuous Parameterization and Constraints}
\label{app:continuous_parameterization}

To train VBFN on graphs with \emph{discrete} node/edge attributes, we first apply a continuous parameterization that preserves differentiability while keeping decoding simple.
Specifically, each graph $G=(X, A)$ is mapped to a continuous relaxation $g(G)$, and we vectorize it as
\begin{equation}
\bm z\ \triangleq\ \mathrm{vec}\!\big(g(G)\big)\ \in\ \mathbb{R}^{D}.
\label{eq:app_z_def}
\end{equation}
For variable-size graphs, we use a binary mask $\bm m\in\{0,1\}^{D}$ and the diagonal masking operator $\bm M\triangleq \mathrm{diag}(\bm m)$ to zero invalid indices and prevent spurious couplings.

\paragraph{Class-to-continuous mapping.}
For any discrete attribute with $K$ classes, we associate each class $k\in\{1,\dots,K\}$ with a center value $k_c\in[-1,1]$ and its left/right boundaries $(k_l,k_r)$:
\begin{equation}
\begin{aligned}
k_c &= \frac{2k-1}{K} - 1,\\
k_l &= k_c - \frac{1}{K},\\
k_r &= k_c + \frac{1}{K}.
\label{eq:class-mapping}
\end{aligned}
\end{equation}
We encode the $k$-th class by its continuous center $x_k\coloneqq k_c$, yielding a continuous target representation $\mathbf{G}=(\mathbf X,\mathbf A)=g(G)$.
During training and sampling, the model predicts Gaussian parameters $(\bm\mu, \bm\sigma)$ over these continuous targets; the probability of each discrete class is obtained by integrating the Gaussian mass over the corresponding class interval $[k_l,k_r]$ via a truncated Gaussian cumulative distribution function (CDF), and a continuous center estimate can be formed as the expectation over class centers.

\paragraph{Decoding and constraints.}
The inverse mapping $g^{-1}$ converts continuous predictions back to discrete attributes and enforces structural constraints through simple projection operators.
Examples include rounding to the nearest class center for discrete attributes, enforcing symmetry for undirected adjacency, and applying padding conventions under $\bm m$.
These operations affect only the parameterization layer and are orthogonal to the core contribution of VBFN, which lies in the joint precision geometry and the induced Bayesian update.

\subsection{Truncated Gaussian CDF and per-class Probabilities.}
\label{app:cdf}

VBFN represents discrete attributes by continuous class centers in $[-1,1]$.
At time $t$, the backbone $\widehat{\bm z}_\phi$ predicts for each discrete scalar attribute a Gaussian parameter pair $(\mu^{(d)},\sigma^{(d)})$ with $\sigma^{(d)}>0$, which we convert into a categorical distribution over $K$ classes by integrating the Gaussian mass over class intervals.
This CDF-based decoding is used for the edge channel on all datasets, and for the node channel whenever node attributes are discrete. For continuous node attributes, we directly regress $X$ and skip this decoding.

For a scalar attribute dimension $d$, define the Gaussian CDF
\begin{equation}
F(x \mid \mu^{(d)}, \sigma^{(d)}) \;=\;
\frac{1}{2}\left[ 1 + \mathrm{erf}\!\left( \frac{x-\mu^{(d)}}{\sqrt{2}\,\sigma^{(d)}} \right) \right],
\label{eq:gauss_cdf_vbfn}
\end{equation}
where $\mathrm{erf}(u)=\frac{2}{\sqrt{\pi}}\int_0^u e^{-t^2}\,dt$.
Since our class centers and boundaries lie in $[-1,1]$, we use the truncated CDF
\begin{equation}
\mathcal{F}(x \mid \mu^{(d)}, \sigma^{(d)}) \;=\;
\begin{cases}
0, & x\le -1,\\
1, & x\ge 1,\\
F(x \mid \mu^{(d)}, \sigma^{(d)}), & \text{otherwise.}
\end{cases}
\label{eq:trunc_cdf_vbfn}
\end{equation}
Given class boundaries $\{(k_l,k_r)\}_{k=1}^K$ from Eq.~\eqref{eq:class-mapping}, the probability mass assigned to class $k$ is
\begin{equation}
p^{(d)}(k) \;=\;
\mathcal{F}(k_r \mid \mu^{(d)}, \sigma^{(d)})
-\mathcal{F}(k_l \mid \mu^{(d)}, \sigma^{(d)}),
\qquad k=1,\dots,K.
\label{eq:per_class_mass_vbfn}
\end{equation}
We apply validity masks and renormalize over classes so that $\sum_k p^{(d)}(k)=1$ on valid entries.

During training, we use the expected class-center representation:
\begin{equation}
\widehat{x}^{(d)}_c \;=\; \sum_{k=1}^K p^{(d)}(k)\,k_c,
\label{eq:expected_center_vbfn}
\end{equation}
and compute masked MSE between $\widehat{x}^{(d)}_c$ and the ground-truth class center $x^{(d)}_c$.
During sampling, we decode discretely by either $\arg\max_k p^{(d)}(k)$ or by sampling from the categorical distribution $p^{(d)}(\cdot)$, followed by dataset-specific projection steps.

\subsection{Construction of the Observation Precision $\bm\Omega_{\mathrm{obs}}$}
\label{app:obs_precision}

This appendix specifies how we instantiate the observation precision $\bm\Omega_{\mathrm{obs}}$ used in the structured channel
(Eqs~\eqref{eq:method_sender} and~\eqref{eq:method_receiver}).
Recall that we require $\bm\Omega_{\mathrm{obs}}\succ\bm 0$ so that $\|\cdot\|_{\bm\Omega_{\mathrm{obs}}}$ is a true norm and the quadratic in
Proposition~\ref{prop:structured_kl} is non-degenerate.

\paragraph{Laplacian-based construction and SPD regularization.}
We reuse the same fixed dependency graph $\mathcal{H}=(\mathcal{V}_{\mathcal{H}},\mathcal{E}_{\mathcal{H}})$ introduced for the prior in
Sec.~\ref{subsec:method_intrinsic_structural_priors}.
Let $\bm{\mathcal L}$ denote its weighted combinatorial Laplacian as defined in Eq.~\eqref{eq:method_laplacian_def}.
Since $\bm{\mathcal L}$ is positive semidefinite, it may have a non-trivial null space.
To avoid collapse along this null space, we add a diagonal regularization term, mirroring the prior construction in Eq.~\eqref{eq:method_omega_prior}.

Concretely, given a diagonal mask operator $\bm M$ for padding/validity constraints, we define the Laplacian-based observation precision as
\begin{equation}
\bm\Omega_{\mathrm{obs}}
=
\bm M\bm{\mathcal L}\bm M
+
\varepsilon_{\mathrm{obs}}\bm I,
\qquad
\varepsilon_{\mathrm{obs}}>0.
\label{eq:app_omega_obs_lap}
\end{equation}
The coupling strengths $(\lambda_X,\lambda_A)$ are encoded in the edge weights $\{w_{uv}\}$ and therefore in $\bm{\mathcal L}$, so we do not introduce an additional scalar multiplier for $\bm\Omega_{\mathrm{obs}}$.
Placing $\varepsilon_{\mathrm{obs}}\bm I$ outside the masking guarantees strict positive definiteness even when $\bm M$ zeroes invalid entries.
Indeed, for any nonzero $\bm v\in\mathbb{R}^{D}$,
\begin{equation}
\bm v^\top \bm\Omega_{\mathrm{obs}} \bm v
=
(\bm M\bm v)^\top \bm{\mathcal L} (\bm M\bm v)
+
\varepsilon_{\mathrm{obs}}\|\bm v\|_2^2
\ \ge\
\varepsilon_{\mathrm{obs}}\|\bm v\|_2^2
\ >\ 0,
\label{eq:app_obs_spd}
\end{equation}
where we used $\bm{\mathcal L}\succeq \bm 0$.
Therefore $\bm\Omega_{\mathrm{obs}}\succ\bm 0$, and the induced quadratic $\|\cdot\|_{\bm\Omega_{\mathrm{obs}}}^2$ is a valid norm.

\paragraph{Diagonalized and independent variants.}
In practice, we also consider simplified observation precisions derived from Eq.~\eqref{eq:app_omega_obs_lap}.
A diagonalized variant keeps heterogeneous per-dimension accuracies while removing off-diagonal coupling
\begin{equation}
\bm\Omega_{\mathrm{obs}}
=
\mathrm{diag}\big(\bm\Omega_{\mathrm{obs}}^{\mathrm{lap}}\big),
\qquad
\bm\Omega_{\mathrm{obs}}^{\mathrm{lap}} \text{ given by Eq.~\eqref{eq:app_omega_obs_lap}},
\label{eq:app_omega_obs_diag}
\end{equation}
and an independent variant sets $\bm\Omega_{\mathrm{obs}}=\bm I$.
All choices satisfy $\bm\Omega_{\mathrm{obs}}\succ\bm 0$ and are compatible with the tied-covariance KL simplification in
Proposition~\ref{prop:structured_kl}.
The main text remains agnostic to which option is used, as the theoretical derivations only require strict positive definiteness and a fixed (sample-agnostic) operator.

\begin{table}[t]
    \centering
    \caption{\textbf{Wall-clock time and memory on QM9 and ZINC250k.} We report average time per training epoch and sampling step, as well as peak GPU memory for training and sampling separately.}
    \label{tab:time}
    \setlength{\tabcolsep}{4.5pt}
    \begin{tabular}{lcccc|cccc}
        \toprule
        \multirow{3}{*}{Method} 
        & \multicolumn{4}{c|}{QM9} 
        & \multicolumn{4}{c}{ZINC250k} \\
        \cmidrule(lr){2-5}\cmidrule(lr){6-9}
        & \multicolumn{2}{c}{Training} 
        & \multicolumn{2}{c|}{Sampling} 
        & \multicolumn{2}{c}{Training} 
        & \multicolumn{2}{c}{Sampling} \\
        \cmidrule(lr){2-3}\cmidrule(lr){4-5}\cmidrule(lr){6-7}\cmidrule(lr){8-9}
        & Time (s) & Mem (GB) & Time (s) & Mem (GB) & Time (s) & Mem (GB) & Time (s) & Mem (GB) \\
        \midrule
        GruM & 57.91 & 8.55 & 0.82 & 6.80 & 793.05 & 26.28 & 3.02 & 24.80 \\
        VBFN (Cholesky) & 36.52 & 8.60 & 0.81 & 6.95 & 845.07 & 27.37 & 3.25 & 45.38 \\
        VBFN (CG) & 41.77 & 8.57 & 0.83 & 6.92 & 798.19 & 27.37 & 3.04 & 45.35 \\
        \bottomrule
    \end{tabular}
\end{table}

\begin{table}[t]
\centering
\caption{ZINC250k ablations for VBFN. * denotes default setting.}
\setlength{\tabcolsep}{5pt}
\begin{tabular}{lcccc}
\toprule
Setting & Valid $\uparrow$ & FCD $\downarrow$ & NSPDK $\downarrow$ & Scaf.$\uparrow$ \\
\midrule
$\lambda_X{=}0,\lambda_A{=}0$ & 99.99 & 42.607 & 0.9502 & 0.0 \\
$\lambda_X{=}0.2,\lambda_A{=}0$ & 0.06 & 43.440 & 0.5279 & 0.0 \\
$\lambda_X{=}0,\lambda_A{=}0.2$ & 50.89 & 24.042 & 0.0435 & 0.0031 \\
$\lambda_X{=}0.2,\lambda_A{=}0.2$* & 99.63 & 1.307 & 0.0007 & 0.6057 \\
\midrule
$\bm\Omega_{\mathrm{obs}}$: prior & 51.22 & 21.350 & 0.0442 & 0.0026 \\
$\bm\Omega_{\mathrm{obs}}$: diag\_prior* & 99.63 & 1.307 & 0.0007 & 0.6057 \\
\midrule
Cholesky & 99.50 & 1.211 & 0.0007 & 0.6049 \\
CG* & 99.63 & 1.307 & 0.0007 & 0.6057 \\
\bottomrule
\end{tabular}
\label{tab:ablation_zinc}
\end{table}

\section{Supplementary Experiments}
\label{app:supp_experiments}

While VBFN is applicable to both molecular and synthetic graph benchmarks, the commonly used synthetic datasets, Planar, SBM, and Tree, are small-scale with 200 graphs and thus less informative for evaluating efficiency and scalability. We therefore conduct most supplementary experiments on the molecular benchmarks QM9 and ZINC250k with more than 100,000 graphs, which better stress both the coupled Bayesian update and the linear solver used in VBFN.

\subsection{Wall-clock Time and Memory: Training and Sampling}
\label{app:wall_clock}

We benchmark wall-clock time and peak memory against GruM as a representative and competitive Diffusion baseline with the same Graph Transformer as backbone.
For training, we report average time per epoch; for sampling, we report average time per sampling step. We also report peak GPU memory for training and sampling separately.
All results are averaged over 5 runs with identical batch sizes (Training: 1,024 for QM9, 256 for ZINC250k; Sampling: 10,000 for QM9, 2,500 for ZINC250k) and sampling steps.

Table~\ref{tab:time} summarizes wall-clock time and peak GPU memory on QM9 and ZINC250k. On QM9, VBFN is consistently faster than GruM in training, while memory remains essentially unchanged. Sampling on QM9 is also comparable across methods: time per step differs marginally, and peak memory stays within a narrow range. This indicates that for small graphs, the coupled precision update does not introduce a noticeable system overhead, and end-to-end cost is dominated by the shared neural backbone.

On ZINC250k, the efficiency picture changes due to larger coupled systems. Cholesky becomes less favorable, since it is slower in training and substantially increases sampling memory (45.38GB vs.\ 24.80GB for GruM), reflecting the higher memory footprint of factorization-related buffers at scale. CG avoids explicit factorization and yields a more stable profile: its training time and sampling time are both close to GruM, although sampling memory remains high, suggesting that the dominant bottleneck is not only factorization but also how coupled edge variables and precision-related caches are materialized during sampling. Overall, these results motivate CG as the default solver for large graphs and point to memory-efficient sampling as an orthogonal optimization direction.

\begin{figure}[H]
    \centering
    \includegraphics[width=.92\linewidth]{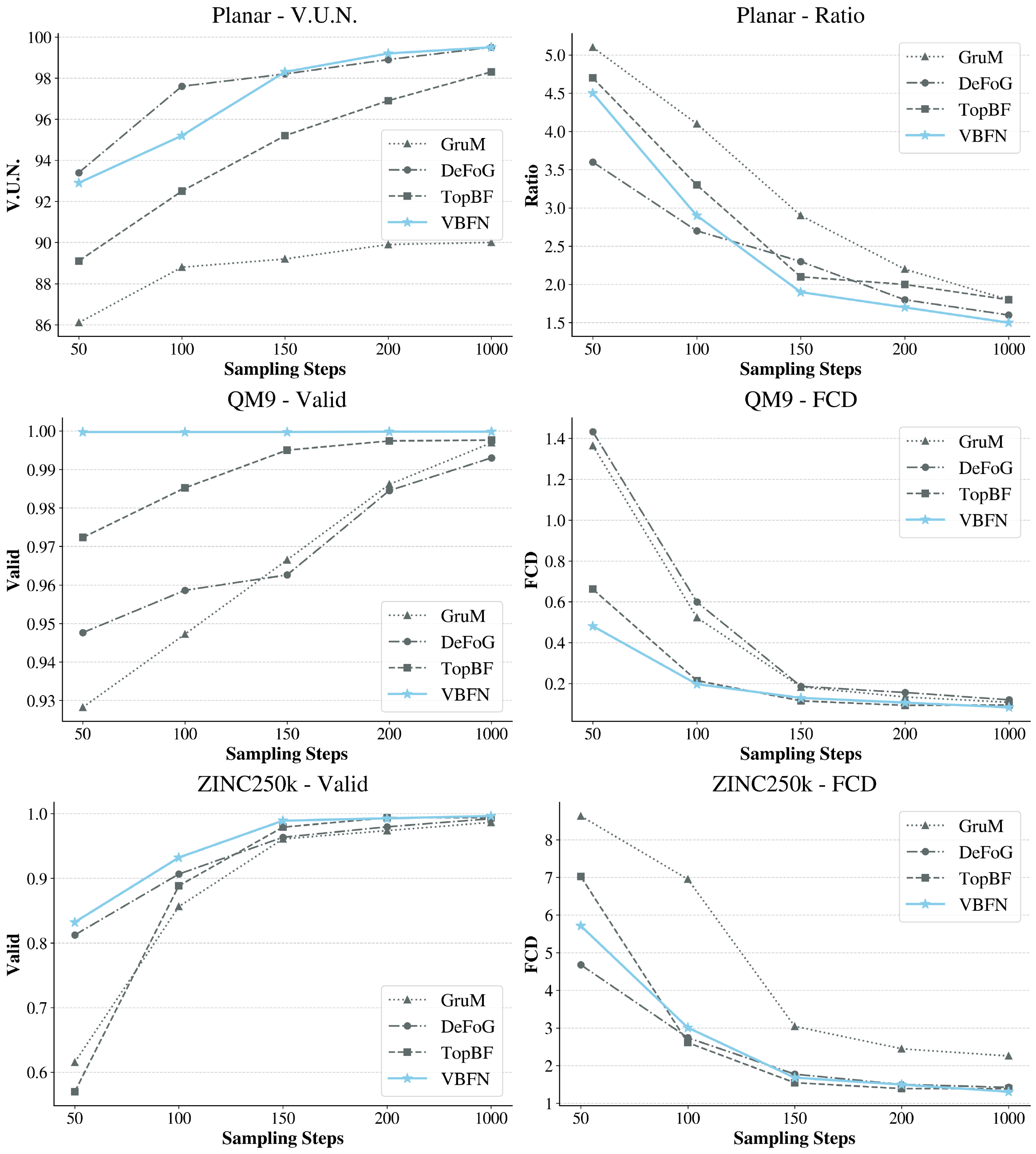}
    \caption{Metrics versus sampling steps on Planar, QM9, and ZINC250k. Higher is better for V.U.N./Valid; Lower is better for Ratio/FCD.}
    \label{fig:sampling_steps_curve}
\end{figure}

\subsection{Ablation Studies on ZINC250k}
\label{app:ablation}

Table~\ref{tab:ablation_zinc} studies how coupling strengths and observation precision choices affect ZINC250k generation. The no-coupling setting $\lambda_X{=}0,\lambda_A{=}0$ attains 99.99\% validity, but in practice collapses to generating repetitive, trivial molecules. The severe degradation in FCD (42.607), NSPDK (0.9502), and scaffold score (0.0) confirms this failure mode.
Partial coupling is also inadequate. Node-only coupling ($\lambda_X{=}0.2,\lambda_A{=}0$) catastrophically breaks validity (0.06\%), while edge-only coupling ($\lambda_X{=}0,\lambda_A{=}0.2$) improves distributional metrics but still yields low validity (50.89\%) and near-zero scaffold diversity.

Full coupling ($\lambda_X{=}0.2,\lambda_A{=}0.2$) is required to simultaneously achieve high validity and distributional matching, supporting the hypothesis that joint node–edge precision geometry is central to VBFN. The observation precision ablation further shows that tying $\bm\Omega_{\mathrm{obs}}$ to the prior is harmful on ZINC250k (Valid: 51.22, FCD: 21.350), whereas the default diagonalized observation precision is crucial for stable, high-quality generation. Finally, Cholesky and CG produce very similar generation metrics under the same geometry, indicating that solver choice primarily affects efficiency (Table~\ref{tab:time}) rather than sample quality when CG is run to sufficient accuracy.

\subsection{Performance vs.\ Sampling Steps}
\label{app:steps_curve}

We track how sample quality metrics evolve as the number of sampling steps increases to make a comparison among GruM, DeFoG, TopBF, and our VBFN.
GruM is a strong Diffusion baseline on graph generation. DeFoG is the mightiest flow matching baseline with strong empirical performance. And TopBF represents a closely related BFN family that also relies on continuous parameterization and discretization at decoding.

\cref{fig:sampling_steps_curve} illustrates how sample quality evolves as we increase the number of sampling steps. Overall, increasing steps consistently improves distributional metrics for all methods, but VBFN exhibits a more favorable tradeoff between quality and computation, as it reaches strong validity substantially earlier, while remaining competitive and often best on distributional measures at larger step budgets.

On Planar, VBFN achieves the best V.U.N.\ among the compared methods throughout the practically relevant range by 200 steps, while also attaining the lowest Ratio across all step budgets, with a particularly pronounced gap at $1,000$ steps. This indicates that VBFN’s coupled update improves structural consistency per step, reducing common Planar failure modes captured by Ratio. 
On QM9, VBFN attains near-saturated validity already at 50 steps and remains essentially perfect thereafter, whereas baselines require substantially more steps to approach the same validity. For FCD, methods are competitive at intermediate steps, but VBFN achieves the best final FCD at $1,000$ steps while retaining its early-step validity advantage. 
On ZINC250k, the benefit of VBFN is most evident in validity. VBFN is consistently the highest at almost every step count, demonstrating more information-efficient reverse updates on larger graphs. While DeFoG/TopBF can be competitive on FCD at some intermediate step budgets, VBFN catches up and achieves the best final FCD at $1,000$ steps, yielding the strongest overall tradeoff between validity and distributional matching.

In summary, these results support the interpretation that the structured, coupled Bayesian update makes each reverse step more effective. VBFN tends to reach usable validity in fewer steps and remains robust as the step budget increases, ultimately matching or surpassing the best distributional quality at longer trajectories.


\begin{table*}[t]
  \centering
  \small
  \renewcommand \arraystretch{1.0}
  \setlength{\tabcolsep}{6pt}
  \caption{Hyperparameters of VBFN.}
  \begin{tabular}{cccc}
    \toprule
    Category & Hyperparameter & Description & Value \\
    \midrule

    \multirow{5}{*}{Bayesian Flow}
      & $\sigma_{1,\mathbf{X}}$ & Noise std for node channel & $0.2$ \\
      & $\sigma_{1,\mathbf{A}}$ & Noise std for edge channel & $0.2$ \\
      & $T$ & Sampling steps & $1,000$ \\
      & $t_{\min}$ & Minimum time clamp & $10^{-4}$ \\
      & \texttt{scale\_X}, \texttt{scale\_A} & Extra scaling (when enabled) & QM9/Synth: none;\ \  ZINC: $5$, $1$ \\


    \midrule
    \multirow{8}{*}{Precision Geometry}
      & $\bm\Omega_{\mathrm{prior}}^X$ mode & Node prior precision construction & \texttt{complete} \\
      & $\bm\Omega_{\mathrm{prior}}^A$ mode & Edge prior precision construction & \texttt{line\_complete} \\
      & $\bm\Omega_{\mathrm{obs}}^X$ mode & Node observation precision mode & \texttt{diag\_prior} \\
      & $\bm\Omega_{\mathrm{obs}}^A$ mode & Edge observation precision mode & \texttt{diag\_prior} \\
      & $\lambda_X$ & Coupling strength for node geometry & QM9: $1.0$;\ \  ZINC/Synth: $0.2$ \\
      & $\lambda_A$ & Coupling strength for edge geometry & QM9: $1.0$;\ \  ZINC/Synth: $0.2$ \\
      & $\varepsilon_X$ & Diagonal regularizer for nodes & QM9: $10^{-4}$;\ \  ZINC/Synth: $10^{-2}$ \\
      & $\varepsilon_A$ & Diagonal regularizer for edges & QM9: $10^{-4}$;\ \  ZINC/Synth: $10^{-2}$ \\

    \midrule
    \multirow{4}{*}{Linear Solver}
      & \texttt{solver} & SPD solver for the Bayesian update & cg \\
      & \texttt{cg\_max\_iter} & Maximum CG iterations & $50$ \\
      & \texttt{cg\_tol} & CG tolerance & $10^{-6}$ \\
      & \texttt{cg\_precond} & Preconditioner & \texttt{jacobi} \\

    \midrule
    \multirow{6}{*}{Optimization}
      & \texttt{optimizer} & Optimizer & \texttt{AdamW} \\
      & $\eta$ & Learning rate & $10^{-4}$ \\
      & \texttt{wd} & Weight decay & $10^{-12}$ \\
      & \texttt{bs} & Training batch size & QM9: $1,024$;\ ZINC250k: $256$;\ Synth: $64$\\
      & \texttt{epochs} & Training epochs & QM9/ZINC250k: $3,000$;\ Synth: $30,000$ \\
      & \texttt{clip} & Gradient clipping norm & $10,000$ \\

    \midrule
    \multirow{2}{*}{Decoding / Masks}
      & \texttt{eps\_prob} & Numerical stabilization probability & $10^{-12}$ \\
      & \texttt{mask\_diag\_edges} & Mask diagonal adjacency & \texttt{True} \\

    \bottomrule
  \end{tabular}
  \label{tab:hp_vbfn}
\end{table*}

\section{Experimental Details}
\label{app:exp_details}

\subsection{Hardware and Software Environment}
\label{app:exp_env}

To ensure fair and reproducible comparisons, all experiments were conducted under a fixed hardware and software environment.
All models were trained and evaluated on a single NVIDIA A6000 GPU with an AMD EPYC 7763 (64 cores) @ 2.450GHz CPU.
Experiments were implemented in PyTorch 2.0.1 with CUDA 11.7.
Unless otherwise specified, we fix random seeds, use identical batch sizes and dataloader settings across methods, and average efficiency measurements over repeated runs.

\subsection{Backbone Network $\widehat{\bm z}_\phi$: Graph Transformer}
\label{app:exp_backbone}

VBFN instantiates $\widehat{\bm z}_\phi$ as a Graph Transformer, with the same architecture as the previous common practice~\cite{vignac2023digress}. The backbone takes the posterior belief $(\bm\theta_X,\bm\theta_A)$ and time $t$, together with padding masks, and outputs predictions that are used to form targets for both training and sampling.

\subsection{Datasets}
\label{app:exp_datasets}

\paragraph{QM9 and ZINC250k.}
QM9 and ZINC250k are standard molecular graph generation benchmarks.
Nodes and edges are categorical (atom and bond types), represented by a continuous parameterization via class-centers (Appendix~\ref{app:continuous_parameterization}) and decoded by CDF-based class probabilities.
QM9 contains 133,885 molecules with up to 9 heavy atoms, while ZINC250k contains 249,455 molecules with up to 38 heavy atoms of 9
types, making ZINC250k substantially more demanding for scalability.
Variable-size molecules are handled by padding and binary masks; adjacency is projected to satisfy symmetry and diagonal conventions.

\paragraph{Planar and SBM.}
Planar and SBM are synthetic graph benchmarks with 200 graphs, as well as $|N|=64$ and $\text{44}\leq|N|\leq\text{187}$, respectively.
Node attributes are continuous 2D coordinates $X\in\mathbb{R}^{N\times 2}$, while adjacency is binary.
Accordingly, we regress $X$ with masked MSE and treat $A$ as categorical with CDF-based probabilities.
We apply standard padding/masking and enforce symmetry in decoding.

\paragraph{Tree.}
Tree is a synthetic benchmark with 200 graphs and $|N|=64$, whose node attributes are constant, while edges have multiple discrete types stored as a two-channel encoding $A\in\{0,1\}^{N\times N\times 2}$.
We treat the edge attribute as categorical and decode via CDF-based probabilities, with the same masking and symmetry projection as above.

\begin{figure*}
    \centering
    \includegraphics[width=0.95\linewidth]{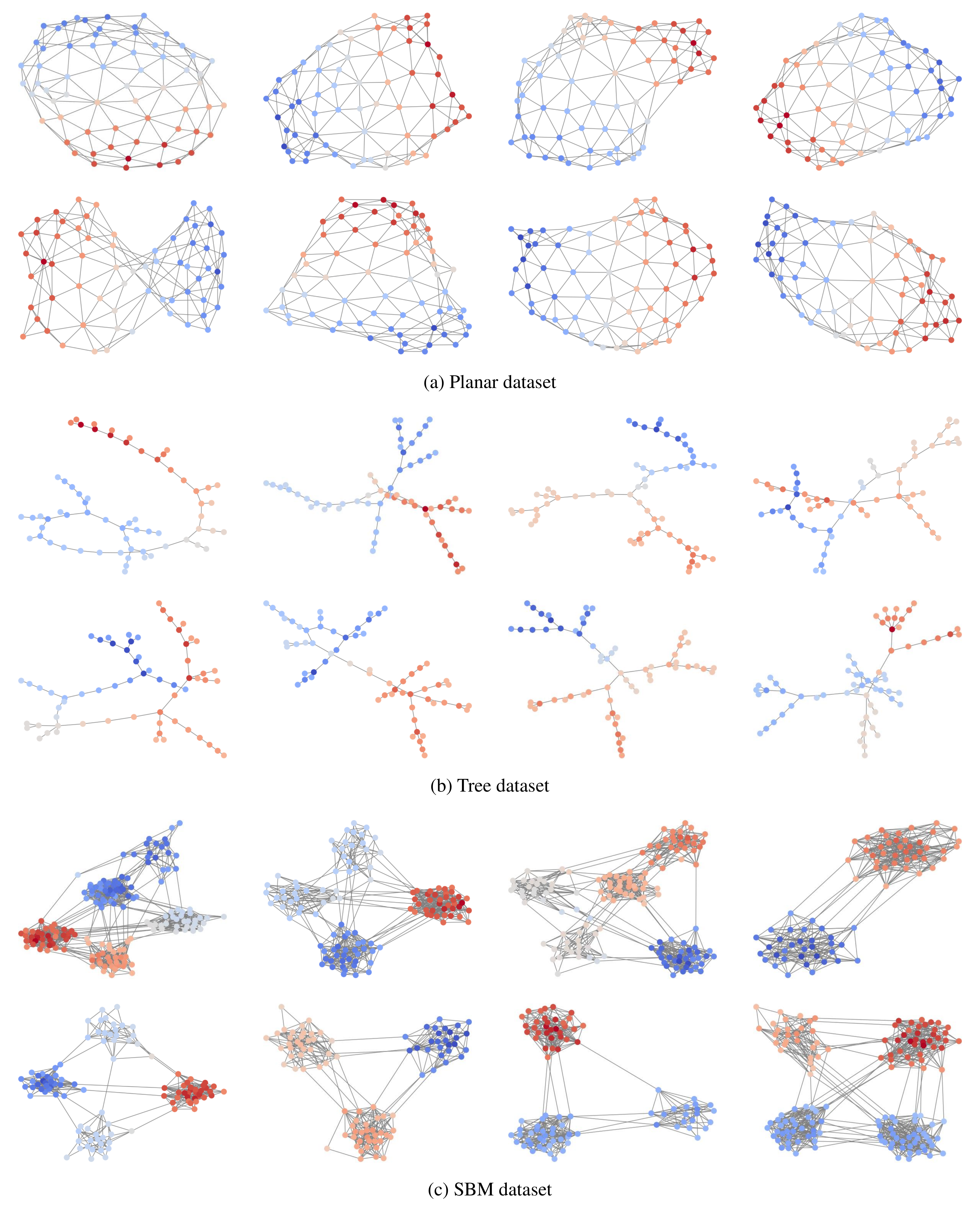}
    \caption{Visualization of generated graphs on synthetic graph datasets.}
    \label{fig:samples_syn}
\end{figure*}

\begin{figure*}[!ht]
    \centering
    \includegraphics[width=0.95\linewidth]{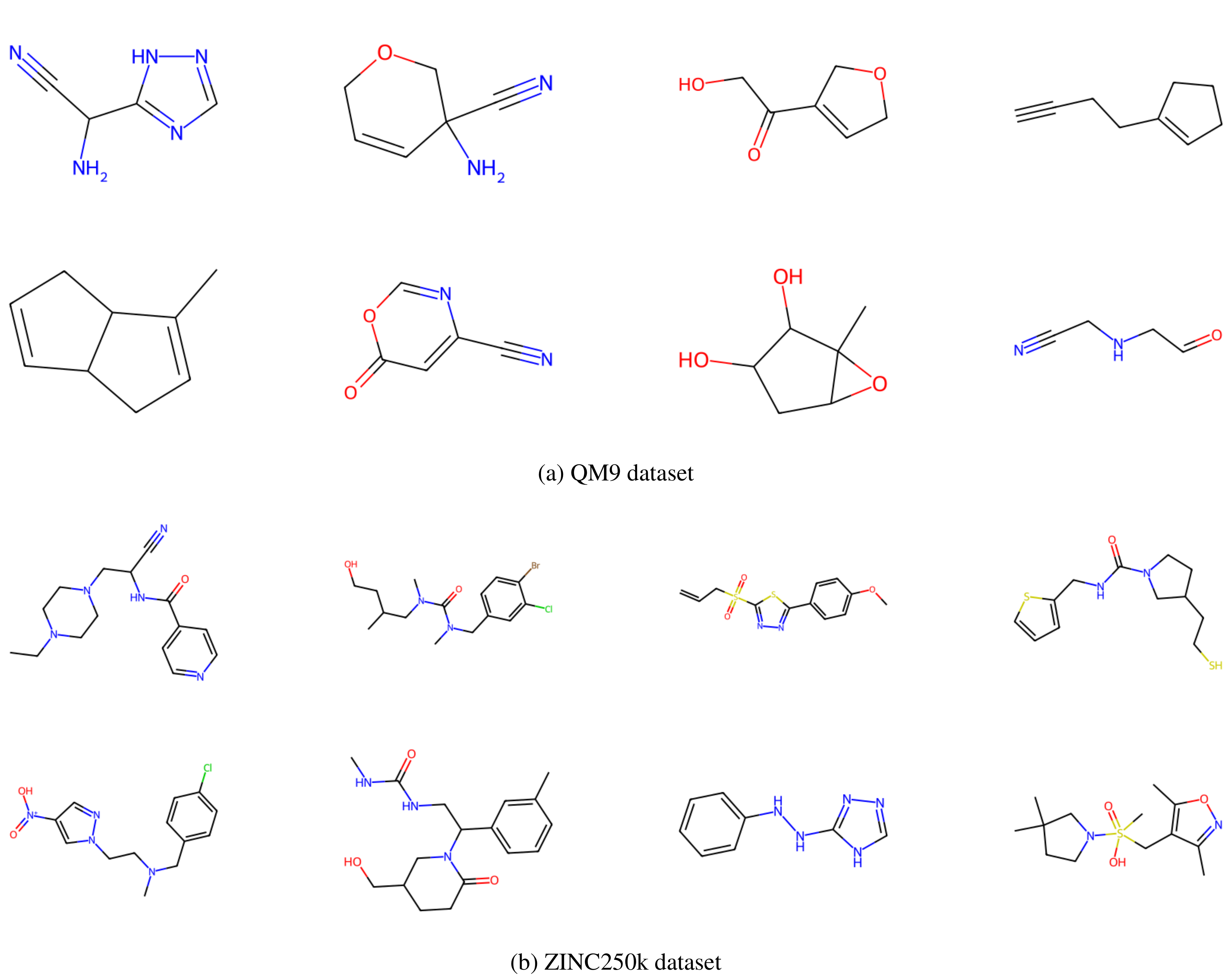}
    \caption{Visualization of generated graphs on molecular graph datasets.}
    \label{fig:samples_mol}
\end{figure*}

\subsection{Hyperparameters}
\label{app:hyper}

We summarize the key hyperparameters of VBFN in Table~\ref{tab:hp_vbfn}.
Unless otherwise specified, we reuse the same configuration across datasets and only adjust dataset-dependent dimensions like numbers of classes or coordinate channels when necessary.
This presentation is intended to make our experimental setup fully reproducible.

\section{Visualization}

We visualize randomly selected samples generated by VBFN on all five datasets in Figures~\ref{fig:samples_syn} and~\ref{fig:samples_mol}.
These qualitative results complement the quantitative metrics reported in the main paper and appendix.

\section{Pseudocodes for VBFN}
\label{app:pseudocode}

For completeness, we provide detailed training and sampling pseudocodes for VBFN in this section.
The pseudocodes follow our implementation, including masking for variable-size graphs and the CDF-based decoding used for discrete attributes.
They serve as a concise reference for reproducing both the coupled Bayesian update and the sampling procedure.

\clearpage

\begin{algorithm}[H]
\caption{Training Algorithm}
\label{alg:vbfn_train}
\begin{algorithmic}
\REQUIRE Dataset of graphs $G=(X, A)$; schedules $\sigma_{1,X},\sigma_{1,A}>0$; $\varepsilon_X,\varepsilon_A>0$; coupling strengths (encoded in $\bm{\mathcal L}$).
\REQUIRE Discretization bins/intervals $\{[k_l^X,k_r^X]\}_{k=1}^{K_X}$, $\{[k_l^A,k_r^A]\}_{k=1}^{K_A}$ and centers $\{k_c^X\},\{k_c^A\}$ (for discrete attributes).
\STATE \textbf{Input:} a minibatch of graphs $\{G_b\}_{b=1}^B$ with node mask $\bm m^X\in\{0,1\}^{B\times N}$ and edge mask $\bm m^A\in\{0,1\}^{B\times N\times N}$.
\STATE Sample $t \sim \mathcal{U}(0,1)$ and clamp $t\ge t_{\min}$.
\STATE Compute $\beta_X \gets \sigma_{1,X}^{-2t}-1$, \quad $\beta_A \gets \sigma_{1,A}^{-2t}-1$. \hfill (elementwise over batch)
\vspace{0.3em}

\STATE \# Build structured precisions (nodes)
\STATE Construct no-leakage dependency graph $\mathcal H_X$ and weighted Laplacian $\bm{\mathcal L}_X$ from $(X, A)$ and mask $\bm m^X$.
\STATE $\bm\Omega_{\mathrm{prior}}^X \gets \bm M_X\,\bm{\mathcal L}_X\,\bm M_X + \varepsilon_X \bm I$ \hfill (Eq.~\eqref{eq:method_omega_prior})
\STATE $\bm\Omega_{\mathrm{obs}}^X \gets \textsc{BuildObs}(\bm\Omega_{\mathrm{prior}}^X,\bm m^X)$ \hfill (\textsc{prior} / \textsc{diag\_prior})
\vspace{0.3em}

\STATE \# Build structured precisions
\STATE Vectorize valid edges (e.g., upper-triangle) to get edge mask $\bm m^{A,\mathrm{vec}}\in\{0,1\}^{B\times E}$ and edge variable $A^{\mathrm{vec}}$.
\STATE Construct line-graph dependency $\mathcal H_A$ and weighted Laplacian $\bm{\mathcal L}_A$ on valid edges.
\STATE $\bm\Omega_{\mathrm{prior}}^A \gets \bm M_A\,\bm{\mathcal L}_A\,\bm M_A + \varepsilon_A \bm I$.
\STATE $\bm\Omega_{\mathrm{obs}}^A \gets \textsc{BuildObs}(\bm\Omega_{\mathrm{prior}}^A,\bm m^{A,\mathrm{vec}})$ \hfill (\textsc{prior} / \textsc{diag\_prior})

\STATE \# Sample flow state $\bm\theta_t$ in closed form
\STATE \textbf{(nodes)} Sample $\bm\epsilon_X\sim\mathcal N(\bm 0,\bm I)$ and set
\STATE $\quad \bm P_X \gets \bm\Omega_{\mathrm{prior}}^X + \beta_X\,\bm\Omega_{\mathrm{obs}}^X$,
\STATE $\quad \bm r_X \gets \beta_X\,\bm\Omega_{\mathrm{obs}}^X\,X_c + \sqrt{\beta_X}\,\bm\Omega_{\mathrm{obs}}^{X\,1/2}\bm\epsilon_X$,
\STATE $\quad \bm\theta_X \gets \textsc{SolveSPD}(\bm P_X,\bm r_X)$, then mask invalid nodes.
\STATE \textbf{(edges)} Sample $\bm\epsilon_A\sim\mathcal N(\bm 0,\bm I)$ and analogously obtain $\bm\theta_A^{\mathrm{vec}}$, then unvectorize to $\bm\theta_A$ and apply edge mask.

\IF{$X$ is discrete}
    \STATE \# Predict clean distribution parameters
    \STATE $(\bm\mu_X,\bm\sigma_X,\bm\mu_A,\bm\sigma_A)\gets \widehat{\bm z}_\phi(\bm\theta_X,\bm\theta_A,t;\bm m^X,\bm m^A)$.

    \STATE \# discretized probabilities via truncated Gaussian CDF on bins
    \STATE $p_X(k)\gets \textsc{CDF}(\bm\mu_X,\bm\sigma_X,k_r^X)-\textsc{CDF}(\bm\mu_X,\bm\sigma_X,k_l^X)$ \hfill ($k=1..K_X$)
    \STATE $p_A(k)\gets \textsc{CDF}(\bm\mu_A,\bm\sigma_A,k_r^A)-\textsc{CDF}(\bm\mu_A,\bm\sigma_A,k_l^A)$ \hfill ($k=1..K_A$)
    \STATE Apply masks and normalize $p_X, p_A$ over categories.
    \STATE $\widehat X_c \gets \sum_k p_X(k)\,k_c^X$, \quad $\widehat A_c \gets \sum_k p_A(k)\,k_c^A$.
\ELSE
    \STATE \# Continuous $X$ regression, discrete $A$ via CDF
    \STATE $(\widehat X_{\mathrm{reg}},\bm\mu_A,\bm\sigma_A)\gets \widehat{\bm z}_\phi(\bm\theta_X,\bm\theta_A,t;\bm m^X,\bm m^A)$.

    \STATE $p_A(k)\gets \textsc{CDF}(\bm\mu_A,\bm\sigma_A,k_r^A)-\textsc{CDF}(\bm\mu_A,\bm\sigma_A,k_l^A)$ \hfill ($k=1..K_A$)
    \STATE Apply edge masks and normalize $p_A$ over categories.
    \STATE $\widehat X_c \gets \widehat X_{\mathrm{reg}}$, \quad $\widehat A_c \gets \sum_k p_A(k)\,k_c^A$.
\ENDIF
    
    \STATE $\mathcal L_X \gets \textsc{MaskedMSE}(\widehat X_c, X_c, \bm m^X)$, \quad $\mathcal L_A \gets \textsc{MaskedMSE}(\widehat A_c, A_c, \bm m^A)$.


\STATE \# continuous-time weighting (matches $d\beta(t)$ discretization)
\STATE $w_X(t)\gets -\ln\sigma_{1,X}\cdot \sigma_{1,X}^{-2t}$,\quad $w_A(t)\gets -\ln\sigma_{1,A}\cdot \sigma_{1,A}^{-2t}$.
\STATE \textbf{return} $\mathcal L_{\infty} \gets w_X(t)\mathcal L_X + w_A(t)\mathcal L_A$.
\end{algorithmic}
\end{algorithm}

\begin{algorithm}[H]
\caption{Sampling Algorithm}
\label{alg:vbfn_sample}
\begin{algorithmic}
\REQUIRE Number of samples $S$; steps $T$; schedules $\sigma_{1,X},\sigma_{1,A}$; solver \textsc{SolveSPD} $\in\{Cholesky,CG\}$.
\STATE \textbf{Output:} generated graphs $\widehat G=(\widehat X,\widehat A)$.
\vspace{0.3em}

\FOR{batch of size $B$ until $S$ samples are generated}
    \STATE Sample node counts and build node mask $\bm m^X$; derive edge mask $\bm m^A$ (optionally remove diagonal).
    \STATE Build $\bm\Omega_{\mathrm{prior}}^X,\bm\Omega_{\mathrm{obs}}^X$ and $\bm\Omega_{\mathrm{prior}}^A,\bm\Omega_{\mathrm{obs}}^A$ as in Algorithm~\ref{alg:vbfn_train} (sampling uses fixed prior mode; no data-dependent mode).
    \vspace{0.2em}

    \STATE \# initialize natural parameters of the belief
    \STATE $\bm P_X \gets \bm\Omega_{\mathrm{prior}}^X$, \quad $\bm h_X \gets \bm\Omega_{\mathrm{prior}}^X\,\bm\theta_{0,X}$ \hfill (typically $\bm\theta_{0,X}=\bm 0$)
    \STATE $\bm P_A \gets \bm\Omega_{\mathrm{prior}}^A$, \quad $\bm h_A \gets \bm\Omega_{\mathrm{prior}}^A\,\bm\theta_{0,A}$.
    \STATE $\beta_X^{\mathrm{prev}}\gets 0$, \quad $\beta_A^{\mathrm{prev}}\gets 0$.

    \FOR{$i=1\ \textbf{to}\ T$}
        \STATE $t \gets \frac{i-1}{T}$, clamp $t\ge t_{\min}$.
        \STATE $\beta_X \gets \sigma_{1,X}^{-2t}-1$, \quad $\beta_A \gets \sigma_{1,A}^{-2t}-1$.
        \STATE $\alpha_X \gets \max(\beta_X-\beta_X^{\mathrm{prev}},0)$, \quad $\alpha_A \gets \max(\beta_A-\beta_A^{\mathrm{prev}},0)$.
        \STATE $\beta_X^{\mathrm{prev}}\gets \beta_X$, \quad $\beta_A^{\mathrm{prev}}\gets \beta_A$.
        \vspace{0.2em}

        \STATE \# current posterior means (coupled): $\bm\theta = \bm P^{-1}\bm h$
        \STATE $\bm\theta_X \gets \textsc{SolveSPD}(\bm P_X,\bm h_X)$, then apply node mask.
        \STATE $\bm\theta_A^{\mathrm{vec}} \gets \textsc{SolveSPD}(\bm P_A,\bm h_A)$, then apply edge mask and unvectorize to $\bm\theta_A$.
        \vspace{0.2em}

        \IF{$X$ is discrete}
            \STATE $(\bm\mu_X,\bm\sigma_X,\bm\mu_A,\bm\sigma_A)\gets \widehat{\bm z}_\phi(\bm\theta_X,\bm\theta_A,t;\bm m^X,\bm m^A)$.
            \STATE $p_X(k)\gets \textsc{CDF}(\bm\mu_X,\bm\sigma_X,k_r^X)-\textsc{CDF}(\bm\mu_X,\bm\sigma_X,k_l^X)$; normalize and mask.
            \STATE $p_A(k)\gets \textsc{CDF}(\bm\mu_A,\bm\sigma_A,k_r^A)-\textsc{CDF}(\bm\mu_A,\bm\sigma_A,k_l^A)$; normalize and mask.
            \STATE Compute center means: $x_{\mathrm{mean}}\gets \sum_k p_X(k)\,k_c^X$, \quad $a_{\mathrm{mean}}\gets \sum_k p_A(k)\,k_c^A$; vectorize $a_{\mathrm{mean}}$ to $a_{\mathrm{mean}}^{\mathrm{vec}}$.
    
            \STATE \# sample incremental messages: $y\sim \mathcal N(\text{mean},(\alpha\bm\Omega_{\mathrm{obs}})^{-1})$
            \STATE Sample $\bm\epsilon_X,\bm\epsilon_A\sim\mathcal N(\bm 0,\bm I)$.
            \STATE $ \bm y_X \gets x_{\mathrm{mean}} + (\alpha_X\bm\Omega_{\mathrm{obs}}^X)^{-1/2}\bm\epsilon_X$,\quad $\bm y_A^{\mathrm{vec}} \gets a_{\mathrm{mean}}^{\mathrm{vec}} + (\alpha_A\bm\Omega_{\mathrm{obs}}^A)^{-1/2}\bm\epsilon_A$.
            \vspace{0.2em}
        \ELSE
            \STATE $(\widehat X_{\mathrm{reg}},\bm\mu_A,\bm\sigma_A)\gets \widehat{\bm z}_\phi(\bm\theta_X,\bm\theta_A,t;\bm m^X,\bm m^A)$.
            \vspace{0.3em}
            \STATE $\widehat X_c \gets \widehat X_{\mathrm{reg}}$.
            \STATE $p_A(k)\gets \textsc{CDF}(\bm\mu_A,\bm\sigma_A,k_r^A)-\textsc{CDF}(\bm\mu_A,\bm\sigma_A,k_l^A)$; normalize and mask.
            \STATE Compute center means: $a_{\mathrm{mean}}\gets \sum_k p_A(k)\,k_c^A$; vectorize $a_{\mathrm{mean}}$ to $a_{\mathrm{mean}}^{\mathrm{vec}}$.
            \STATE Sample $\bm\epsilon_X,\bm\epsilon_A\sim\mathcal N(\bm 0,\bm I)$.
            \STATE $\bm y_X \gets \widehat X_c + (\alpha_X\bm\Omega_{\mathrm{obs}}^X)^{-1/2}\bm\epsilon_X$, \quad $\bm y_A^{\mathrm{vec}} \gets a_{\mathrm{mean}}^{\mathrm{vec}} + (\alpha_A\bm\Omega_{\mathrm{obs}}^A)^{-1/2}\bm\epsilon_A$.
        \ENDIF

        \STATE \# Bayesian fusion in natural parameters
        \STATE $\bm h_X \gets \bm h_X + \alpha_X\,\bm\Omega_{\mathrm{obs}}^X\,\bm y_X$,
               \quad $\bm P_X \gets \bm P_X + \alpha_X\,\bm\Omega_{\mathrm{obs}}^X$.
        \STATE $\bm h_A \gets \bm h_A + \alpha_A\,\bm\Omega_{\mathrm{obs}}^A\,\bm y_A^{\mathrm{vec}}$,
               \quad $\bm P_A \gets \bm P_A + \alpha_A\,\bm\Omega_{\mathrm{obs}}^A$.
    \ENDFOR

    \STATE \# final decode at $t=1$
    \STATE $\bm\theta_X \gets \textsc{SolveSPD}(\bm P_X,\bm h_X)$,\quad $\bm\theta_A \gets \textsc{Unvec}(\textsc{SolveSPD}(\bm P_A,\bm h_A))$.
    
    \IF{$X$ is discrete}
        \STATE $(\bm\mu_X,\bm\sigma_X,\bm\mu_A,\bm\sigma_A)\gets \widehat{\bm z}_\phi(\bm\theta_X,\bm\theta_A,1;\bm m^X,\bm m^A)$.
        \STATE $p_X(k)\gets \textsc{CDF}(\bm\mu_X,\bm\sigma_X,k_r^X)-\textsc{CDF}(\bm\mu_X,\bm\sigma_X,k_l^X)$; normalize and mask.
        \STATE $p_A(k)\gets \textsc{CDF}(\bm\mu_A,\bm\sigma_A,k_r^A)-\textsc{CDF}(\bm\mu_A,\bm\sigma_A,k_l^A)$; normalize and mask.
        \STATE $\widehat X \gets \arg\max_k p_X(k)$,\quad $\widehat A \gets \arg\max_k p_A(k)$.
    \ELSE
        \STATE $(\widehat X_{\mathrm{reg}},\bm\mu_A,\bm\sigma_A)\gets \widehat{\bm z}_\phi(\bm\theta_X,\bm\theta_A,1;\bm m^X,\bm m^A)$.
        \STATE $p_A(k)\gets \textsc{CDF}(\bm\mu_A,\bm\sigma_A,k_r^A)-\textsc{CDF}(\bm\mu_A,\bm\sigma_A,k_l^A)$; normalize and mask.
        \STATE $\widehat X \gets \widehat X_{\mathrm{reg}}$, \quad $\widehat A \gets \arg\max_k p_A(k)$.
    \ENDIF
    \STATE Enforce masks and symmetry on $\widehat A$ (and convert to one-hot channels if required by the dataset).

\ENDFOR
\end{algorithmic}
\end{algorithm}


\section{Future Work}
\label{app:future_work}

A natural direction is to make the weighted Laplacian more expressive.
In the current implementation, we fix the edge weights on the dependency graph. All incidence edges $(u,v)\in\mathcal E_{\mathrm{inc}}$ share a global coefficient $\lambda_X$, and all symmetry edges $(u,v)\in\mathcal E_{\mathrm{sym}}$ share a global coefficient $\lambda_A$.
An appealing extension is to learn edge weights $\{w_{uv}\}_{(u,v)\in\mathcal E_{\mathcal H}}$ while keeping the dependency topology fixed, for instance, parameterizing $w_{uv}$ by a small network conditioned on local topology, or learning a structured reweighting with symmetry constraints.
The weighted adjacency $\bm W$ (and hence $\bm{\mathcal L}=\bm\Delta-\bm W$) would then become data-adaptive, while the prior precision remains SPD via $\bm\Omega_{\mathrm{prior}}=\bm M\bm{\mathcal L}\bm M+\varepsilon\bm I$.

Another direction concerns the continuous parameterization of discrete classes.
Mapping classes to $[-1,1]$ introduces an implicit ordering, which may induce an ordinal bias if the permutation of class indices changes the geometry.
For molecular graphs, we currently order atom classes by decreasing valence as a reasonable inductive bias, since elements with similar valence tend to share chemical behaviors.
Future work can systematically evaluate robustness by randomizing class orders, comparing valence-aware orders against alternative heuristics, and exploring order-invariant parameterizations while keeping CDF-based decoding intact.


\end{document}